\pdfoutput=1
\documentclass[letterpaper]{article} 
\usepackage{aaai2026}  
\usepackage{times}  
\usepackage{helvet}  
\usepackage{courier}  
\usepackage[hyphens]{url}  
\usepackage{graphicx} 
\urlstyle{rm} 
\usepackage{natbib}  
\usepackage{caption} 
\frenchspacing  
\setlength{\pdfpagewidth}{8.5in}  
\setlength{\pdfpageheight}{11in}  
%
\usepackage{algorithm}
\usepackage{algorithmic}

%
\usepackage{newfloat}
\usepackage{listings}
\DeclareCaptionStyle{ruled}{labelfont=normalfont,labelsep=colon,strut=off} 
\lstset{%
	basicstyle={\footnotesize\ttfamily},
	numbers=left,numberstyle=\footnotesize,xleftmargin=2em,
	aboveskip=0pt,belowskip=0pt,%
	showstringspaces=false,tabsize=2,breaklines=true}
\floatstyle{ruled}
\newfloat{listing}{tb}{lst}{}
\floatname{listing}{Listing}
%
\pdfinfo{
/TemplateVersion (2026.1)
}

\setcounter{secnumdepth}{2} 

%

\usepackage[utf8]{inputenc} 
\usepackage{booktabs}       
\usepackage{amsfonts}       
\usepackage{nicefrac}       
\usepackage{microtype}      
\usepackage{xcolor}         

\usepackage{subcaption}
\usepackage{enumitem}
\usepackage{adjustbox}
\usepackage{multirow}
\usepackage{latexsym}
\usepackage{amsmath, amssymb, amsthm}
\usepackage{colortbl}

\newtheorem{theorem}{Theorem}
\newtheorem{lemma}[theorem]{Lemma}
\newtheorem{remark}[theorem]{Remark}

\newcommand{\AlgName}{KeepKV}
\newcommand{\MergingProblemName}{Attention Sag}

\title{KeepKV: Achieving Periodic Lossless KV Cache Compression \\ for Efficient LLM Inference}

\author{
    Yuxuan Tian\textsuperscript{\rm 1},
    Zihan Wang\textsuperscript{\rm 1},
    Yebo Peng\textsuperscript{\rm 1},
    Aomufei Yuan\textsuperscript{\rm 1},
    Zhiming Wang\textsuperscript{\rm 1},\\
    Bairen Yi\textsuperscript{\rm 2},
    Xin Liu\textsuperscript{\rm 2},
    Yong Cui\textsuperscript{\rm 3},
    Tong Yang\textsuperscript{\rm 1}\thanks{Corresponding author.}
}
\affiliations{
    \textsuperscript{\rm 1}State Key Laboratory of Multimedia Information Processing, School of Computer Science, Peking University\\
    \textsuperscript{\rm 2}ByteDance\\
    \textsuperscript{\rm 3}Tsinghua University
}

\begin{document}

\maketitle

\begin{abstract}

Efficient inference of large language models (LLMs) is hindered by an ever-growing key-value (KV) cache, making KV cache compression a critical research direction. Traditional methods selectively evict less important KV cache entries, which leads to information loss and hallucinations. Recently, merging-based strategies have been explored to retain more information by merging KV pairs that would be discarded; however, these existing approaches inevitably introduce inconsistencies in attention distributions before and after merging, causing degraded generation quality. To overcome this challenge, we propose \AlgName{}, a novel adaptive KV cache merging method designed to preserve performance under strict memory constraints, achieving single-step lossless compression and providing error bounds for multi-step compression. \AlgName{} introduces the Electoral Votes mechanism that records merging history and adaptively adjusts attention scores. Moreover, it further leverages a novel Zero Inference-Perturbation Merging method, compensating for attention loss resulting from cache merging. Extensive experiments on various benchmarks and LLM architectures demonstrate that \AlgName{} substantially reduces memory usage while successfully retaining essential context information, achieving over $2\times$ inference throughput improvement and maintaining superior generation quality even with only $10\%$ KV cache budgets.

\end{abstract}

\section{Introduction} \label{sec:intro}

Transformer-based large language models (LLMs) have demonstrated remarkable capabilities across various applications \cite{touvron2023llama2,jiang2023mistral7b,openai2024gpt4report,wan2024llm_survey,rozière2024codellama}. To accelerate inference, LLMs commonly employ a key-value (KV) cache mechanism, which stores the KV embeddings of previously processed tokens to avoid redundant computations \cite{Vaswani2017AttentionisAllYouNeed,Dai2019TransformerXL,Rae2019CompressiveTF}. However, as LLMs continue to support increasingly longer context lengths, the size of the KV cache grows rapidly, becoming a major bottleneck for inference \cite{kwon2023PagedAttention}. For example, in the case of LLaMA-3-70B, a batch size of 128 with an 8K context length requires up to 320GB of KV cache memory \cite{grattafiori2024llama3}. Consequently, compressing the KV cache while preserving generation quality has become a crucial challenge.

Prior works mainly explore two approaches for KV cache compression: eviction-based and merging-based methods, both of which are inherently lossy. Eviction-based approaches selectively retain critical cache entries using heuristics like attention scores and token positions, permanently discarding less critical entries \cite{xiao2024StreamingLLM,zhang2023h2o,reid2024RoCo,liu2024scissorhands,li2024SnapKV,yang2024pyramidinfer} and thus causing context loss and potential hallucinations \cite{zhang2024cam}. In contrast, merging-based methods aim to integrate rather than discard KV entries to retain more information. Recent representative studies, such as CaM~\cite{zhang2024cam}, D2O~\cite{wan2024D2O}, and KVMerger~\cite{wang2024KVMerger}, have explored strategies like weighted key-value merging to mitigate context loss. Nevertheless, these methods vary widely in merge candidate selection and merging weight computation, and lack solid theoretical foundations. We observe that existing strategies inevitably induce attention inconsistencies and output perturbation. Specifically, the merged KV pair's attention score is lower than the sum of the original scores prior to merging—a phenomenon we term "\MergingProblemName{}" (illustrated in Figure~\ref{fig:Intro-algo_compare}). These issues underline the necessity for an efficient and theoretically grounded KV cache merging strategy.

\begin{figure*}[t]
  \centering
  \includegraphics[width=0.9\textwidth]{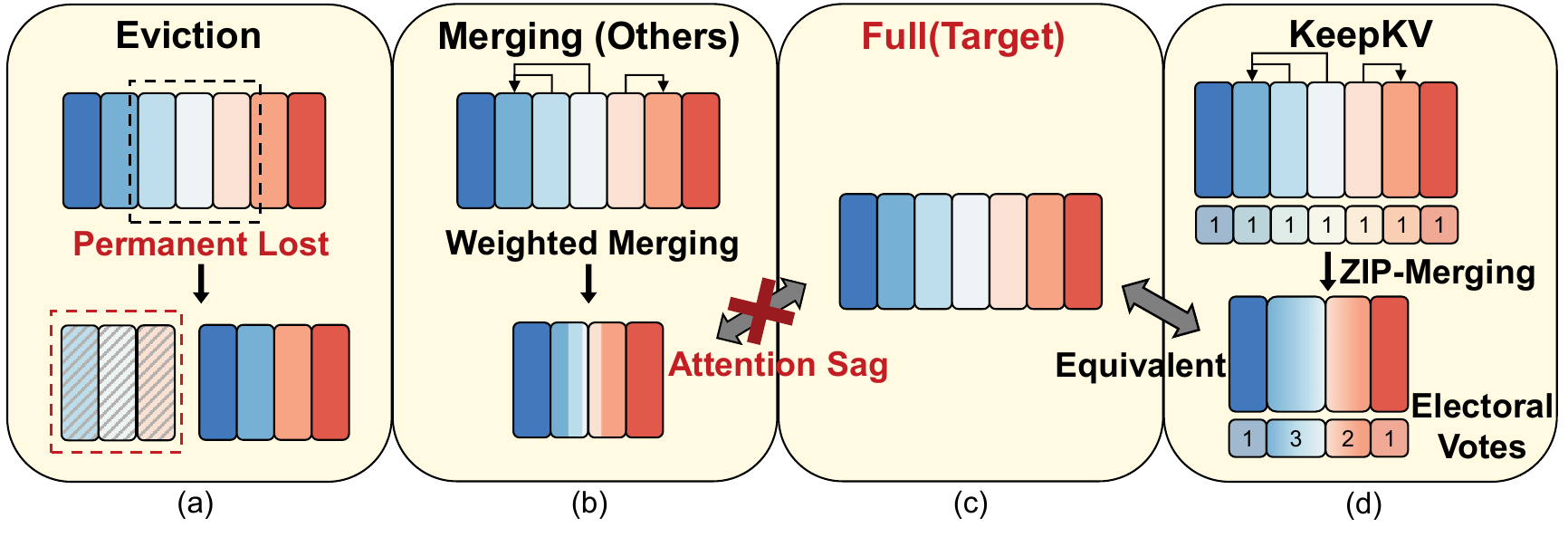} 
  \caption{Illustration of \AlgName{} vs. Existing Methods. The three middle blocks represent KV subject to eviction/merging. (a) Eviction methods permanently discard them. (b) Merging methods integrates them into retained KV, but the result is not equivalent to the full KV, causing "\MergingProblemName." (c) Full KV serves as the ideal baseline. (d) \AlgName{} uses Electoral Votes as merging records and applies ZIP-Merging to minimize output disturbance, ensuring consistency and improving performance.}
  \label{fig:Intro-algo_compare}
\end{figure*}

In this paper, we propose \AlgName{}, a novel KV cache merging method designed to maintain inference consistency and preserve essential contextual information. To the best of our knowledge, \AlgName{} is the first approach to achieve single-step lossless compression and to provide theoretical error bounds for multi-step compression. We first conduct a comprehensive theoretical analysis of existing eviction and weighted merging methods, grounded in the attention computation process, to reveal their fundamental limitations.Building on these theoretical insights, we propose a two-stage innovative design in \AlgName{}:

First, we propose techniques that achieve lossless compression for a single step. Specifically, we introduce the Electoral Votes mechanism, which records merging history, enabling accurate reconstruction of the original KV embeddings from compressed representations. Additionally, we present the Zero Inference-Perturbation Merging (ZIP-Merging) approach, which automatically adjusts weights to compensate for any losses caused by merging, maintaining attention consistency. These designs theoretically guarantee zero output perturbation at the current iteration despite compression.

Second, we extend \AlgName{} to multi-step generation by estimating attention scores based on historical patterns. This is motivated by our empirical observation of strong locality in attention scores, also confirmed in prior studies \cite{dong2024qaq,zhang2024cam}. Crucially, we provide theoretical analyses guaranteeing bounded output perturbation across multiple steps, thereby ensuring consistent inference quality under extended generation. Moreover, we offer a theoretical interpretation for prevalent similarity-based candidate selection methods, incorporating it into our design. 

By integrating these innovations, \AlgName{} further enables periodic lossless KV cache compression by storing a complete KV cache externally and periodically loading compressed representations into memory, thereby maintaining inference consistency and preserving essential contextual information. Through theoretical derivation and extensive experiments, we demonstrate that \AlgName{} effectively preserves attention stability and output consistency, outperforming state-of-the-art KV cache eviction and merging methods. The contributions of this paper are summarized as follows:

\begin{itemize}

\item We propose \AlgName{}, a novel adaptive KV cache merging approach, which introduces the Electoral Votes mechanism and Zero Inference-Perturbation Merging, achieving single-step lossless compression and providing error bounds for multi-step compression.

\item Extensive experiments across various tasks and models show that \AlgName{} maintains better performance under limited cache, outperforming existing KV cache eviction and merging methods. 

\item We are the first to theoretically analyze KV merging from the perspective of eliminating output perturbation. We provide guarantees on the perturbation bound of \AlgName{} and reveal the theoretical basis for merge candidate selection and weight design. Hopefully, our study can inspire future research on KV cache compression.

\end{itemize}

\section{Related Work} \label{sec:related_work}

KV cache has become a major bottleneck for efficient LLMs inference. Post-training optimization serves as a key solution due to its real-time and extensible capabilities.\cite{shi2024surveyonkvcache}. Existing methods fall into three categories: \textbf{quantization}, \textbf{eviction}, and \textbf{merging}.

\textbf{KV Cache Quantization.} Quantization methods convert tensor values to lower precision to reduce bit-width. KVQuant \cite{hooper2024KVQuant} applies Per-Channel Quantization for keys and Per-Token Quantization for values. MiKV \cite{yang2024MiKV} introduces mixed-precision KV caching, where less critical KV are stored at lower precision. Additionally, GEAR \cite{kang2024GEAR} leverages low-rank matrix approximation for quantization residuals to minimize quantization loss. Our \AlgName reduces the number of cached KV pairs through merging, which is orthogonal to quantization methods and can be combined for better efficiency.

\textbf{KV Cache Eviction.} Eviction methods only retain more important KV entries. StreamingLLM \cite{xiao2024StreamingLLM} and LM-infinite \cite{han2024lminfinite} identifies the importance of the initial k tokens for generation. H2O \cite{zhang2023h2o}, ScissorsHand \cite{liu2024scissorhands} and RoCo \cite{reid2024RoCo} recognize crucial KV based on attention scores, while SnapKV \cite{li2024SnapKV} utilizes attention within an observation window. Recent works explore improved budget allocation strategies across layers and heads. Pyramid \cite{cai2024pyramidkv,yang2024pyramidinfer} allocates more cache to lower layers, whereas AdaKV \cite{feng2024adakv}, HeadKV \cite{fu2024headkv}, and DuoAttention \cite{xiao2024duoattention} focus on inter-head differences. However, eviction causes irreversible information loss, potentially degrading generation quality.

\textbf{KV Cache Merging.} KV cache merging combines less important KV entries instead of permanently discarding them. DMC \cite{nawrot2024dmc} learns when and how to merge through training, which limits generalization and introduces extra overhead. In contrast, CaM \cite{zhang2024cam} adaptively merges evicted value states into others but does not merge the corresponding keys. Recently, D2O \cite{wan2024D2O} selects merge candidates and assigns merging weights based on cosine similarity between key states, while KVMerger \cite{wang2024KVMerger} introduces a clustering-based method to group merge candidates and computes merging weights using Gaussian Kernel Weights. However, these methods fail to maintain attention consistency before and after merging, leading to output perturbation. We propose a novel merging approach designed to eliminate output perturbation, supported by theoretical analysis and extensive comparisons.

\section{Methodology} \label{sec:meth}

\subsection{Preliminary: Inference with KV Cache} \label{sec:meth:pre}

We first introduce the attention computation process with KV cache. For simplicity, we consider a single attention head at one layer. Let the attention module's weight matrices be $W_q, W_k, W_v \in \mathbb{R}^{d \times d}$, where $d$ denotes the hidden dimension. During the prefill stage, given an input prompt tensor $X_L \in \mathbb{R}^{L \times d} = [x_1, x_2, \dots, x_L]$, where $L$ represents the prompt length, the KV states are computed and stored in the KV cache as follows:
\begin{align}
K_L &= X_L W_k = [k_1, k_2, \dots, k_L], \nonumber\\
V_L &= X_L W_v = [v_1, v_2, \dots, v_L].
\end{align}

In the decoding phase, KV cache are repeatedly utilized, while the newly computed KV pairs are continuously appended to it. Specifically, given the input at the $t$-th generation step, $x_t \in \mathbb{R}^d$, the KV cache update and attention computation are performed as follows:
\begin{align}
K_t &= [K_{t-1}, k_t], \quad k_t = x_t W_k \notag\\
V_t &= [V_{t-1}, v_t], \quad v_t = x_t W_v
\end{align}

\begin{align}
A^t &= \text{softmax} \left( \frac{q_t K_t^T}{\sqrt{d}} \right), \quad q_t = x_t W_q \notag\\
s^t_i &= e^{\frac{q_t k_i}{\sqrt{d}}}, \quad
o_t = \sum_{i=1}^{t}A_i^t v_i = \frac{\sum_{i=1}^{t}s^t_i v_i}{\sum_{i=1}^{t}s^t_i}
\label{eq:attention_compute}
\end{align}

KV cache effectively reduces redundant computation, but at the cost of increased memory consumption. Therefore, an important challenge is to compress the KV cache while maintaining model performance.

\subsection{Rethinking KV Cache Eviction and Merging} \label{sec:meth:rethinking}

Eviction and merging methods reduce memory usage by decreasing the number of stored KV pairs. The core motivation behind these studies is to minimize the impact of cache compression on the output. A fundamental \textit{subtask} is to ensure that the output $(o_t)$ remains as close as possible before and after compression at the current step. However, our analysis shows that existing methods inevitably introduce output perturbation and can not accomplish this task.

\textbf{Perturbation in KV Cache Eviction.} Eviction methods discard KV pairs deemed unimportant. Suppose we discard the pair $(k_e, v_e)$, and denote the output as $o'_t$. Based on Equation \ref{eq:attention_compute}, we obtain: 

\begin{equation}
o'_t = \frac{\sum_{i=1,i\neq e}^{t}s^t_iv_i}{\sum_{i=1,i\neq e}^{t}s^t_i} = \frac{1}{1-A^t_e}\left(o_t-A^t_ev_e\right).
\label{eq:eviction_perturbation}
\end{equation}

\begin{remark} 
Equation \ref{eq:eviction_perturbation} reveals that evicting \( (k_e, v_e) \) causes \( o'_t \) to deviate from \( o_t \), with the deviation primarily determined by \( A_e^t \). This formally explains why eviction methods generally prioritize discarding KV pairs with lower attention scores. 
\end{remark}

Although current methods optimize eviction and cache allocation strategies \cite{yang2024pyramidinfer,fu2024headkv} to minimize output impact, they cannot eliminate the perturbation in Equation \ref{eq:eviction_perturbation}. Previous studies have indicated that attention is not always sparse, especially in tasks requiring full context, as shown in Figure \ref{fig:information}. Moreover, evicted KV may become important later, but irreversible eviction leads to permanent loss.

\begin{figure*}[t]
    \centering
    \begin{subfigure}[t]{0.24\textwidth}
        \includegraphics[width=\linewidth]{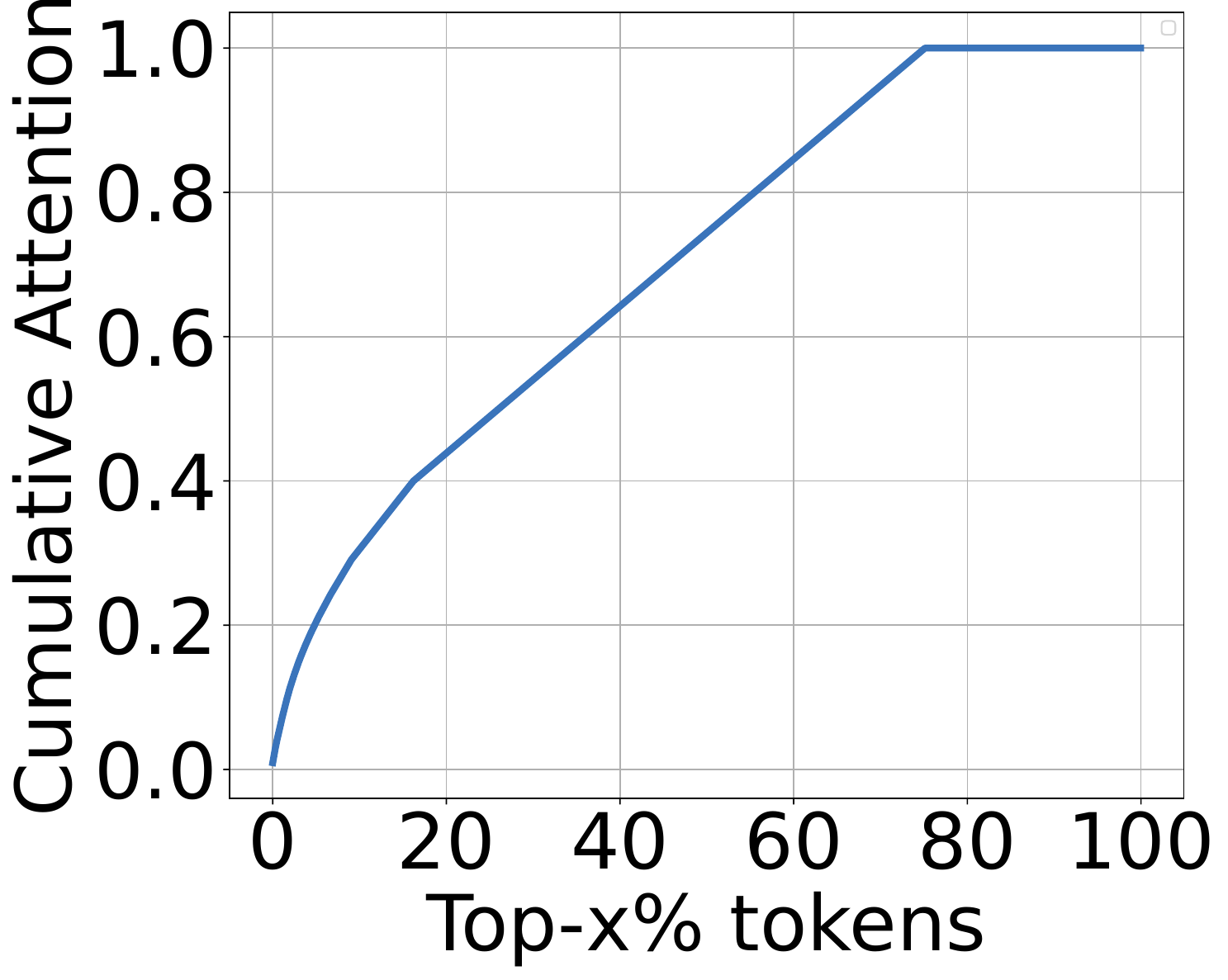}
        \caption{}
    \end{subfigure}
    \hfill
    \begin{subfigure}[t]{0.24\textwidth}
        \includegraphics[width=\linewidth]{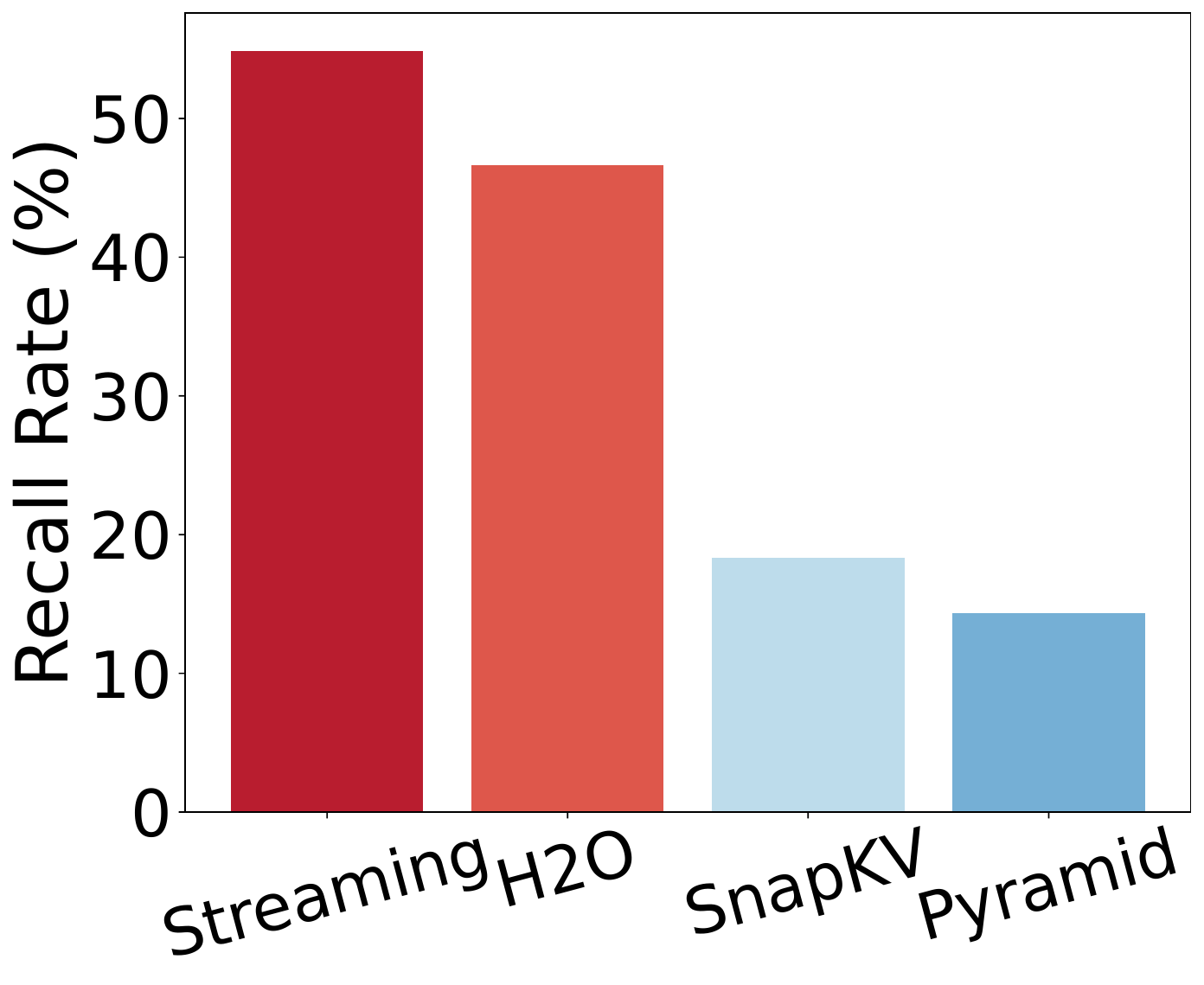}
        \caption{}
    \end{subfigure}
    \hfill
    \begin{subfigure}[t]{0.24\textwidth}
        \includegraphics[width=\linewidth]{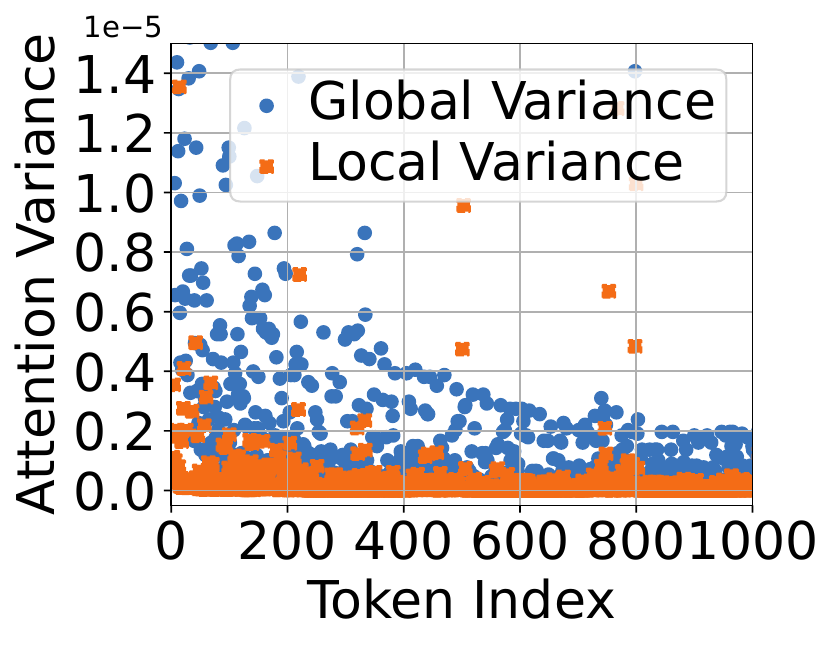}
        \caption{}
    \end{subfigure}
    \hfill
    \begin{subfigure}[t]{0.24\textwidth}
        \includegraphics[width=\linewidth]{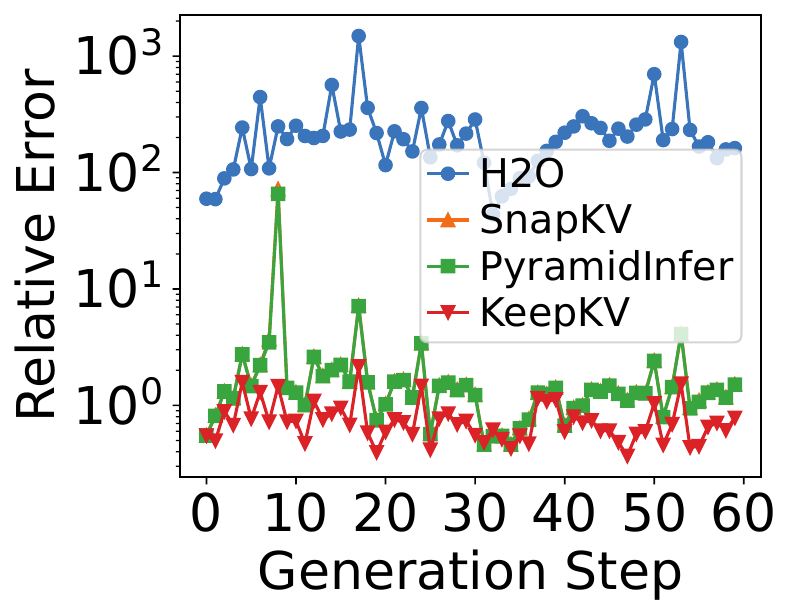}
        \caption{}
    \end{subfigure}
    \caption{(a) Cumulative distribution of attention scores. Retaining the top-$k$ tokens does not always preserve the majority of scores. (b) Proportion of to-be-evicted prompt tokens appearing in the top-20\% attention scores during generation (compression rate = 20\%). 
    (c) Each token's variance of its attention scores at each generation step (blue dots) is greater than the average variance within a sliding window (orange dots).
    (d) Relative errors for prediction of \AlgName{} and existing methods.
    }
    \label{fig:information}
\end{figure*}

\textbf{\MergingProblemName{} in KV Cache Merging.}  
Merging methods integrate less important KV into others rather than discarding them. Existing studies typically use weighted merging \cite{nawrot2024dmc,wan2024D2O,wang2024KVMerger}; formally, merging \( (k_e, v_e) \) into \( (k_c, v_c) \) is expressed as:

\begin{equation}
k_r = w_e k_e + w_c k_c,\quad v_r = w_e v_e + w_c v_c. 
\label{eq:weighted_merging}
\end{equation} 

Here, \( (k_r, v_r) \) are the merged vectors, with weights \( w_e, w_c \) determined by the merging method. In D2O \cite{wan2024D2O}, they depend on the cosine similarity between \( k_e \) and \( k_c \), while in KVMerger \cite{wang2024KVMerger}, they are computed using Gaussian Kernel values. The weights satisfy the normalization condition \( w_e + w_c = 1 \). However, this widely used convex combination method also introduces output perturbations: 

\begin{theorem}
Current weighted merging (convex combination) methods reduce the merged KV pair's attention score compared to the sum of the original scores before merging, i.e., ${A'}_r^t < A_e^t + A_c^t$, ultimately leading to $\left\| o'_t - o_t \right\| > 0$.
\label{thm:attn_collapse}
\end{theorem}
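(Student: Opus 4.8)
The plan is to establish the two claims in Theorem~\ref{thm:attn_collapse} separately: first the strict inequality on the merged attention score, then the nonzero output deviation. Throughout I will work with the attention computation in Equation~\ref{eq:attention_compute}, writing $s^t_i = e^{q_t k_i / \sqrt{d}}$ for the unnormalized scores, and I will treat the denominator $Z = \sum_{i=1}^t s^t_i$ as fixed except for the two entries being merged. After merging $(k_e, v_e)$ into $(k_c, v_c)$ via Equation~\ref{eq:weighted_merging}, the single surviving entry contributes an unnormalized score ${s'}_r^t = e^{q_t k_r / \sqrt{d}} = e^{q_t(w_e k_e + w_c k_c)/\sqrt{d}}$, whereas the two original entries contributed $s^t_e + s^t_c$.

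The key step is a convexity argument. Write $a = q_t k_e/\sqrt{d}$ and $b = q_t k_c/\sqrt{d}$, so $s^t_e = e^a$, $s^t_c = e^b$, and ${s'}_r^t = e^{w_e a + w_c b}$ with $w_e + w_c = 1$, $w_e, w_c \in (0,1)$. Strict convexity of the exponential gives $e^{w_e a + w_c b} < w_e e^a + w_c e^b \le \max(e^a, e^b) \le e^a + e^b$, and since $e^a, e^b > 0$ the last inequality is strict as well; hence ${s'}_r^t < s^t_e + s^t_c$ whenever $a \neq b$ (the generic case; I would note the degenerate case $k_e = k_c$ separately, where merging is trivially lossless in score but typically does not occur among distinct tokens). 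Dividing by the (updated) normalizer — or more carefully, observing that the new normalizer $Z' = Z - s^t_e - s^t_c + {s'}_r^t < Z$, so that ${A'}_r^t = {s'}_r^t / Z' < (s^t_e + s^t_c)/Z = A^t_e + A^t_c$ still follows once one checks the quotient inequality $x/(C+x) < y/(C+y)$ for $0 < x < y$ and $C > 0$ — yields ${A'}_r^t < A^t_e + A^t_c$.

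For the second claim, I would show the output cannot coincide with $o_t$. Expanding $o'_t = \big(\sum_{i \neq e,c} s^t_i v_i + {s'}_r^t v_r\big)/Z'$ and subtracting $o_t = \sum_i s^t_i v_i / Z$, a short manipulation isolates the discrepancy as a combination of (i) the score deficit ${s'}_r^t < s^t_e + s^t_c$, which rescales the entire softmax and in particular inflates the weight on all untouched tokens, and (ii) the fact that $v_r = w_e v_e + w_c v_c$ replaces the pair $(v_e, v_c)$ with a single convex blend carrying too little total mass. Unless $v_e = v_c$ and all remaining $v_i$ are equal to that common value — a measure-zero, practically irrelevant configuration — these two effects cannot cancel, so $\|o'_t - o_t\| > 0$. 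I expect the main obstacle to be the bookkeeping in this last step: one must be careful that the renormalization by $Z' < Z$ does not accidentally cancel against the value-blending error, so the cleanest route is to exhibit the deviation as a sum of terms that are provably aligned (e.g., all pointing "toward" the untouched tokens' weighted average), rather than hoping for a single clean closed form. A convenient shortcut is to compare against the eviction identity in Equation~\ref{eq:eviction_perturbation}: merging is equivalent to eviction plus reinjecting mass ${s'}_r^t$ at location $v_r$, and since ${s'}_r^t$ is strictly smaller than the mass $s^t_e + s^t_c$ an exact reconstruction would require, the residual is nonzero except in the degenerate case.
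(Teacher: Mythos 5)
Your proposal is correct and follows essentially the same route as the paper's own proof: the score inequality ${A'}_r^t < A_e^t + A_c^t$ is obtained from strict convexity of the exponential (the paper phrases it as the weighted AM--GM inequality applied to $e^{q_t k_r/\sqrt{d}} = \prod_i (s_i^t)^{w_i}$) followed by the same monotone-renormalization observation that the new softmax denominator shrinks. For the nonvanishing of $\lVert o'_t - o_t \rVert$, both you and the paper fall back on the same genericity argument (the paper invokes near linear independence of the high-dimensional value vectors, you invoke a measure-zero cancellation configuration), so the approaches match in substance and in where they stop short of full rigor.
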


The formal proof is in Appendix \ref{sec:app:kv_merging}. We term this attention inconsistency from merging as \textbf{\MergingProblemName{}} and Figure \ref{fig:information} (c) illustrates this phenomenon. We provide an intuitive comprehension: existing methods merge multiple vectors into one, treating it equivalently as any other single vector in subsequent attention computations. This erases merging history, making it impossible to distinguish whether a KV pair is original or has absorbed numerous others. 

\subsection{Method: \AlgName{}} \label{sec:meth:keepkv_method}

\subsubsection{Electoral Votes and ZIP-Merging} \label{sec:meth:keepkv:votes_and_zipmerging}

\textbf{Electoral Votes.} To address \MergingProblemName{}, we propose the Electoral Votes mechanism, which records the number of merges $p_i$ (initialized to $1$) each KV pair undergoes. A natural analogy is the Electoral College system, where electors hold votes proportional to their state's population rather than a uniform share. The attention score of each KV is then scaled by its votes to approximate the original multiple KV's influence before merging. For example, if a KV pair $(k_r, v_r)$ has a vote count of $ p_r = 3 $, it is equivalent to three identical and independent instances of $(k_r, v_r)$ participating in the attention computation. Formally, the outputs before ($o_t$) and after merging ($o'_t$) are defined as follows:

\begin{align}
o_{t} &= \frac{\sum_{i=1}^{t}p_i s_i^{t} v_i}{\sum_{i=1}^{t}p_i s_i^{t}}, \nonumber \\
o'_{t} &= \frac{\sum_{i=1, i \neq e,c}^{t} p_i s_i^{t} v_i + p_r s_r^{t} v_r}
{\sum_{i=1, i \neq e,c}^{t} p_i s_i^{t} + p_r s_r^{t}}, \nonumber \\
p_r &= p_e + p_c.
\label{eq:keepkv_attention_compute}
\end{align}

\textbf{Zero Inference-Perturbation Merging (ZIP-Merging).} The Electoral Votes mechanism enables the elimination of output perturbations. We define the merging equations and theorem as follows:

\begin{align}
k_r &= \frac{(w_e k_e + w_c k_c) \ln{\frac{w_e + w_c}{p_e + p_c}}}
{w_e \ln{s_e^t} + w_c \ln{s_c^t}}, \nonumber \\
v_r &= \frac{w_e v_e + w_c v_c}{w_e + w_c}, \nonumber \\
w_e &= p_e s_e^t,\quad w_c = p_c s_c^t.
\label{eq:keepkv_merge}
\end{align}

\begin{theorem}
The merging method in Equation \ref{eq:keepkv_merge} is perturbation-free, that is, $\left\| o'_t-o_t\right\| = 0$
\label{thm:keepkv_zero_perturbation}
\end{theorem}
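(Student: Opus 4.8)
The plan is to prove the exact identity by showing that the single post-merge term in Equation~\ref{eq:keepkv_attention_compute} reproduces, term by term, the combined contribution of the two pre-merge terms. Concretely, since $o'_t$ differs from $o_t$ only by replacing $p_e s_e^t v_e + p_c s_c^t v_c$ with $p_r s_r^t v_r$ in the numerator and $p_e s_e^t + p_c s_c^t$ with $p_r s_r^t$ in the denominator, it suffices to establish the two identities $p_r s_r^t = p_e s_e^t + p_c s_c^t$ and $p_r s_r^t v_r = p_e s_e^t v_e + p_c s_c^t v_c$; then $o'_t$ and $o_t$ are the same quotient and $\|o'_t - o_t\| = 0$. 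Using the abbreviations $w_e = p_e s_e^t$, $w_c = p_c s_c^t$ from Equation~\ref{eq:keepkv_merge}, these reduce to $p_r s_r^t = w_e + w_c$ and $p_r s_r^t v_r = w_e v_e + w_c v_c$.

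The key step is to compute $s_r^t = e^{q_t k_r/\sqrt d}$ for the $k_r$ defined in Equation~\ref{eq:keepkv_merge}. First I would invoke the defining relation $s_i^t = e^{q_t k_i/\sqrt d}$ from Equation~\ref{eq:attention_compute}, equivalently the linear identity $q_t k_i = \sqrt d\,\ln s_i^t$ valid for every cached key since $q_t$ is fixed at step $t$. Applying this to $k_e$ and $k_c$ and forming the $w_e,w_c$-weighted combination gives $q_t(w_e k_e + w_c k_c) = \sqrt d\,(w_e \ln s_e^t + w_c \ln s_c^t)$, which is precisely the denominator of the expression for $k_r$ up to the scalar factor $\sqrt d\,\ln\frac{w_e + w_c}{p_e + p_c}$. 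Hence $q_t k_r/\sqrt d = \ln\frac{w_e + w_c}{p_e + p_c}$, so $s_r^t = \frac{w_e + w_c}{p_e + p_c}$. Combined with $p_r = p_e + p_c$, this yields $p_r s_r^t = w_e + w_c$ (the denominator identity), and substituting $v_r = \frac{w_e v_e + w_c v_c}{w_e + w_c}$ then gives $p_r s_r^t v_r = (w_e + w_c)\cdot\frac{w_e v_e + w_c v_c}{w_e + w_c} = w_e v_e + w_c v_c$ (the numerator identity). Therefore $o'_t = o_t$ exactly.

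The main obstacle is conceptual rather than computational: one must recognize that, because $k \mapsto q_t k$ is linear at a fixed step, the target value $s_r^t$ can be set to any prescribed positive number by rescaling $w_e k_e + w_c k_c$, and the logarithmic factor in Equation~\ref{eq:keepkv_merge} is exactly the rescaling that forces $s_r^t$ to equal the vote-normalized average $\frac{w_e + w_c}{p_e + p_c}$. I would also flag the well-definedness conditions the argument implicitly uses: $w_e \ln s_e^t + w_c \ln s_c^t \neq 0$ so that $k_r$ is defined, and $w_e + w_c > 0$, $p_e + p_c > 0$ so that the logarithm is defined (the latter holding since votes are positive integers); these are satisfied generically, and the remaining degenerate configurations can be treated by a separate elementary case. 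Note that no locality or attention-estimation assumption is invoked here, so the single-step guarantee is genuinely exact, in contrast to the multi-step perturbation bound.
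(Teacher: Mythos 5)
Your proof is correct and rests on the same two identities as the paper's own argument, namely $p_r s_r^t = p_e s_e^t + p_c s_c^t$ and $p_r s_r^t v_r = p_e s_e^t v_e + p_c s_c^t v_c$, with the same key computation that the logarithmic rescaling of $k_r$ forces $s_r^t = \frac{w_e + w_c}{p_e + p_c}$. The only difference is direction: the paper derives the formulas of Equation~\ref{eq:keepkv_merge} from the zero-perturbation constraint, whereas you verify that the given formulas satisfy it (and you additionally flag the well-definedness conditions, which the paper omits); this is a presentational reversal, not a different route.
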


\begin{remark}
The proof is in Appendix \ref{sec:app:keepkv_method}. Intuitively, our method ensures attention consistency by preserving historical information via Electoral Votes and applying proper scaling (ZIP-Merging) to $(k_r,v_r)$ instead of a convex combination.     
\end{remark}

This theorem confirms that our novel merging approach can eliminate output perturbations and complete the subtask introduced  at the beginning of this section. However, its applicability remains limited to the current iteration, and extending it to multi-step generation requires additional design. 

\subsubsection{Extending to Multi-Step Generation}
\label{sec:meth:keepkv:multistep}

\textbf{EMA Attention Scores.} For ZIP-Merging to be effective in real-world multi-step generation, a solid comprehension of attention score dynamics is essential. Fortunately, empirical observations show that attention scores exhibit strong locality (Figure \ref{fig:information} (d)), meaning a token's attention scores evolve smoothly across adjacent steps, which is also validated by prior studies \cite{yang2024pyramidinfer,zhang2024cam,dong2024qaq}. From this, we employ the Exponential Moving Average (EMA) \cite{hunter1986EMA, busbridge2023EMAscale} with bias correction, a widely used technique in time-series analysis, formulated as follows:

\begin{equation}
\hat{s^{t}}=\frac{1}{1-{\alpha}^{t}}S^{t},\; S^{t} =\left\{
    \begin{array}{lr}
    \sum\limits_{k=t-w}^{t}(1-\alpha){\alpha}^{t-k}s^{k} , t=L &  \\
    \alpha S^{t-1}+(1-\alpha)s^{t}, t>L& 
    \end{array}
\right.
\label{eq:keepkv_predict}
\end{equation}

Note that after the prefill stage, we compute EMA scores using a recent window of length $ w $ rather than the entire sequence to obtain a more accurate estimation \cite{li2024SnapKV, yang2024pyramidinfer}. We find that this method outperforms mainstream approaches, such as cumulative attention and sliding window averaging, in predicting attention scores.
Building on this, replacing all score $s_i^{t}$ in Equation \ref{eq:keepkv_merge} with our EMA scores $\hat{s_i^{t}}$  from Equation \ref{eq:keepkv_predict} successfully achieves the extension to multi-step generation. Consequently, the future output perturbation becomes estimable and controllable. We present the following theorem and lemma(proof in Appendix \ref{sec:app:error_analysis}):  

\begin{theorem} 
For the $t'$-th step, let $\left|1-\frac{\hat{s_i^{t'}}}{s_i^{t'}} \right| \leq \epsilon, \epsilon <1$, the output perturbation satisfies $\Theta_{t'} < \frac{2\epsilon (1+\epsilon)\gamma}{(1-\epsilon)^2}  $, provided that $\left\|v_i-v_j\right\|\leq\gamma,\forall i\in [t'], j\in \{e,c\}$.
\label{thm:KeepKV_RE}
\end{theorem}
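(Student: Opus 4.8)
\textbf{Proof Proposal for Theorem~\ref{thm:KeepKV_RE}.}

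The plan is to bound the distance between the ``ideal'' output $o_{t'}$ (computed with true scores $s_i^{t'}$ and no merging) and the actual output $o'_{t'}$ (computed after ZIP-Merging with EMA-estimated scores $\hat{s_i^{t'}}$). First I would invoke Theorem~\ref{thm:keepkv_zero_perturbation}: if we had used the exact scores in the ZIP-Merging formula, the merged output would equal $o_{t'}$ exactly. Hence the entire perturbation $\Theta_{t'}$ originates from the substitution $s_i^{t'} \mapsto \hat{s_i^{t'}}$ in Equation~\ref{eq:keepkv_merge}. So I would rewrite $o'_{t'}$ as a weighted average $\sum_i \tilde A_i v_i$ where $\tilde A_i$ are the attention weights induced by the perturbed (effective) scores, and compare it termwise against $o_{t'} = \sum_i A_i v_i$.

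The key algebraic step is to track how the ratio $1-\hat{s_i^{t'}}/s_i^{t'} \in [-\epsilon,\epsilon]$ propagates through two places: (i) the per-token scaled scores $p_i \hat{s_i^{t'}}$ that enter both numerator and denominator of the softmax-style average, and (ii) the merged key/value construction, where $w_e = p_e \hat{s_e^{t'}}$, $w_c = p_c \hat{s_c^{t'}}$ differ from their ideal counterparts. I would write $\hat{s_i^{t'}} = s_i^{t'}(1+\delta_i)$ with $|\delta_i|\le\epsilon$, factor the effective weights as $p_i s_i^{t'}(1+\delta_i)$, and then use the standard estimate that a normalized weighted average $\sum w_i v_i / \sum w_i$ with perturbed weights $w_i(1+\delta_i)$ deviates from the unperturbed one by at most $\frac{2\epsilon}{1-\epsilon}\max_{i,j}\|v_i-v_j\|$ (this is the classical sensitivity bound for convex combinations: shifting mass by a multiplicative factor in $[1-\epsilon,1+\epsilon]$ can move total mass by at most a fraction $\frac{2\epsilon}{1-\epsilon}$ onto the extreme $v$'s). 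Applying $\|v_i-v_j\|\le\gamma$ then yields a bound of order $\frac{2\epsilon\gamma}{1-\epsilon}$ from the softmax reweighting alone; the merged-value term $v_r=(w_e v_e+w_c v_c)/(w_e+w_c)$ contributes an additional factor because its own weights are perturbed, and since $v_r$ also lies within $\gamma$ of every other $v_i$, composing the two perturbation sources multiplies the $(1+\epsilon)$ factor in and tightens the denominator to $(1-\epsilon)^2$, giving exactly $\Theta_{t'} < \frac{2\epsilon(1+\epsilon)\gamma}{(1-\epsilon)^2}$.

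I expect the main obstacle to be making the composition of the two perturbation sources rigorous: the merged pair $(k_r,v_r)$ is itself built from perturbed scores, so when it re-enters the attention average its \emph{effective} score $p_r \hat{s_r^{t'}}$ is not simply $(1+\delta)$ times an ideal value but a nested expression involving the logarithmic correction in the definition of $k_r$ (Equation~\ref{eq:keepkv_merge}). The careful move is to observe that, by construction, ZIP-Merging forces $p_r \hat{s_r^{t'}} = w_e + w_c = p_e\hat{s_e^{t'}} + p_c\hat{s_c^{t'}}$ (this is precisely what Theorem~\ref{thm:keepkv_zero_perturbation} engineers), so the merged term's effective mass is exactly the sum of the two perturbed masses — meaning the merge introduces \emph{no new} multiplicative error beyond the $\delta_e,\delta_c$ already present. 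That reduces the whole problem to a single clean statement about a convex combination over $v_1,\dots,v_{t'}$ (with $v_r$ substituted for $v_e,v_c$) whose weights are each scaled by a factor in $[1-\epsilon,1+\epsilon]$, and the stated bound follows by the telescoping/triangle-inequality argument above together with $\|v_i - v_j\| \le \gamma$.
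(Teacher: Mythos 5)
Your proposal follows essentially the same route as the paper's proof: you correctly identify that the whole perturbation reduces to the mismatch between true and EMA scores, that the ZIP-Merging construction forces $p_r\hat{s}_r^{t'}=p_e\hat{s}_e^{t'}+p_c\hat{s}_c^{t'}$ (and likewise for the value-weighted sum) so the merge adds no new error source, and that the two factors $\tfrac{2\epsilon}{1-\epsilon}$ (from the ratio $\tfrac{s_r^{t'}\hat{s}_j^{t'}}{\hat{s}_r^{t'}s_j^{t'}}$) and $\tfrac{1+\epsilon}{1-\epsilon}$ (from bounding the merged mass $\tfrac{\sum_j p_js_j^{t'}}{p_rs_r^{t'}}$) compose to give the stated bound — exactly the paper's decomposition via the double sum $\sum_{i,j}p_is_i^{t'}p_js_j^{t'}(1-\tfrac{s_r^{t'}\hat{s}_j^{t'}}{\hat{s}_r^{t'}s_j^{t'}})(v_j-v_i)$. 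The only loose spot is your closing characterization ("a convex combination whose weights are each scaled by a factor in $[1-\epsilon,1+\epsilon]$"): in fact the unmerged entries keep their exact true weights and only the entries in $\{e,c\}$ acquire the effective scaling $\tfrac{s_r^{t'}\hat{s}_j^{t'}}{\hat{s}_r^{t'}s_j^{t'}}\in[\tfrac{1-\epsilon}{1+\epsilon},\tfrac{1+\epsilon}{1-\epsilon}]$, but your earlier tracking of the nested ratios already supplies the correct argument.
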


\begin{lemma} 
As the prediction error $\epsilon$ decreases and the merged candidates become increasingly similar, the output perturbation reduces to zero. That is, when either $\epsilon=0$ or $ (k_e,v_e)=(k_c,v_c)$, we have: $\Theta_{t'} = 0$.
\label{lemma:RE_towards_0}
\end{lemma}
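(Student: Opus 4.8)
The plan is to prove Lemma~\ref{lemma:RE_towards_0} as a pair of limiting cases of Theorem~\ref{thm:KeepKV_RE}, together with a direct argument in the second case. For the case $\epsilon = 0$, the strategy is simply to substitute $\epsilon = 0$ into the bound $\Theta_{t'} < \frac{2\epsilon(1+\epsilon)\gamma}{(1-\epsilon)^2}$ supplied by Theorem~\ref{thm:KeepKV_RE}: the numerator carries a factor of $\epsilon$, so the whole expression vanishes, forcing $\Theta_{t'} = 0$ (the strict inequality becomes $\Theta_{t'} < 0$ only in the degenerate limit, so more carefully one notes $\Theta_{t'} \le 0$ together with $\Theta_{t'} \ge 0$ by definition of a norm, hence $\Theta_{t'} = 0$). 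Equivalently, and more transparently, when $\epsilon = 0$ we have $\hat{s_i^{t'}} = s_i^{t'}$ for all $i$, so the EMA scores coincide exactly with the true scores; then ZIP-Merging reduces to the exact construction of Equation~\ref{eq:keepkv_merge}, and Theorem~\ref{thm:keepkv_zero_perturbation} applies verbatim to give $\|o'_{t'} - o_{t'}\| = 0$.

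For the case $(k_e,v_e) = (k_c,v_c)$, the cleanest route is the hypothesis $\|v_i - v_j\| \le \gamma$ in Theorem~\ref{thm:KeepKV_RE}: when the two merged candidates are identical, one can take $\gamma = \|v_e - v_c\| = 0$ (the relevant $v_j$ distances collapse), and the bound again yields $\Theta_{t'} = 0$. I would also give the underlying mechanism directly: if $(k_e,v_e) = (k_c,v_c) =: (k,v)$, then in Equation~\ref{eq:keepkv_merge} the merged value is $v_r = \frac{w_e v + w_c v}{w_e + w_c} = v$, so merging two copies of $(k,v)$ into a single pair with vote count $p_r = p_e + p_c$ contributes $p_r s_r^{t'} v_r$ to the numerator of $o'_{t'}$ in place of $(p_e s_e^{t'} + p_c s_c^{t'}) v$; since $v_r = v$, the value component is unchanged regardless of how the scalar weights shift, and one checks that the denominator terms match as well, so the merged attention output is identical to the unmerged one. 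The key observation is that when the candidates coincide, the merge is value-preserving by construction, independent of any approximation in the scores.

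Finally I would stitch the two cases together with a short sentence noting that these are exactly the two ways the right-hand side of Theorem~\ref{thm:KeepKV_RE}'s bound degenerates to zero (vanishing prediction error kills the $\epsilon$ factor; coincident candidates let $\gamma \to 0$), and that in each case the nonnegativity of the norm $\Theta_{t'}$ promotes the inequality to an equality. The main obstacle I anticipate is purely a matter of careful bookkeeping rather than depth: one must make sure that setting $\gamma = 0$ is legitimate, i.e. that Theorem~\ref{thm:KeepKV_RE}'s constant $\gamma$ is genuinely an upper bound on $\|v_i - v_j\|$ over exactly the indices involved in the merge (so that identical candidates force it to zero), and that the strict "$<$" in the theorem statement does not obstruct concluding "$= 0$" in the limit — which is resolved by pairing it with $\Theta_{t'} \ge 0$. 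No heavy computation is required; the lemma is essentially a corollary read off from the structure of the established bound plus the exact-merging theorem.
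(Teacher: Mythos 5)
Your $\epsilon=0$ case is sound and matches the paper's: the paper also reads it off from the bound of Theorem~\ref{thm:KeepKV_RE}, and your extra care in pairing the degenerate "$<0$" with $\Theta_{t'}\ge 0$ (or, equivalently, observing that $\hat{s}=s$ makes the merge literally the one of Theorem~\ref{thm:keepkv_zero_perturbation}) is a legitimate tightening of what the paper states somewhat loosely.

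The second case, however, has a genuine gap, and it is exactly the obstacle you flagged yourself. In Theorem~\ref{thm:KeepKV_RE} the constant $\gamma$ bounds $\left\|v_i-v_j\right\|$ for \emph{all} $i\in[t']$ and $j\in\{e,c\}$ --- i.e.\ the distance from the candidates' values to every value vector in the cache --- not merely $\left\|v_e-v_c\right\|$. Setting $(k_e,v_e)=(k_c,v_c)$ therefore does \emph{not} let you take $\gamma=0$, and the "read it off the bound" route collapses. Your backup direct argument is the right idea but omits the one step that actually carries the proof: showing $v_r=v$ is immediate, but the claim that "the value component is unchanged regardless of how the scalar weights shift" is false --- if $p_r s_r^{t'}\neq p_e s_e^{t'}+p_c s_c^{t'}$, the softmax denominator and the relative weight of $v$ against all other value vectors both change, and the output moves. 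What must be verified is precisely that the merged key satisfies $k_r=k_e=k_c$, so that $s_r^{t'}=s_e^{t'}=s_c^{t'}$ exactly (not just for the estimated scores) and hence $p_r s_r^{t'}=p_e s_e^{t'}+p_c s_c^{t'}$. This requires substituting the $k_r$ formula of Equation~\ref{eq:app:keepkv_merge_predict} and simplifying the exponent $\ln\bigl(\sum_i p_i\hat{s}^{t'}/\sum_i p_i\bigr)/\bigl(\sum_i p_i\hat{s}^{t'}\ln\hat{s}^{t'}\bigr)=1/\bigl(\hat{s}^{t'}\sum_i p_i\bigr)$, which is exactly the computation the paper performs in Equations~\ref{eq:app:exp_error_bound}--\ref{eq:app:zero_proof} to make the coefficient $1-\frac{s_r^{t'}}{\hat{s}_r^{t'}}\cdot\frac{\hat{s}_j^{t'}}{s_j^{t'}}$ in Equation~\ref{eq:app:RE_fenzi} vanish. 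Your proposal asserts "one checks that the denominator terms match" without doing this check; that check is the entire content of this half of the lemma.
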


\begin{figure*}[t]
  \centering

  \begin{minipage}[c]{0.43\textwidth}
    \centering
    \includegraphics[width=\linewidth]{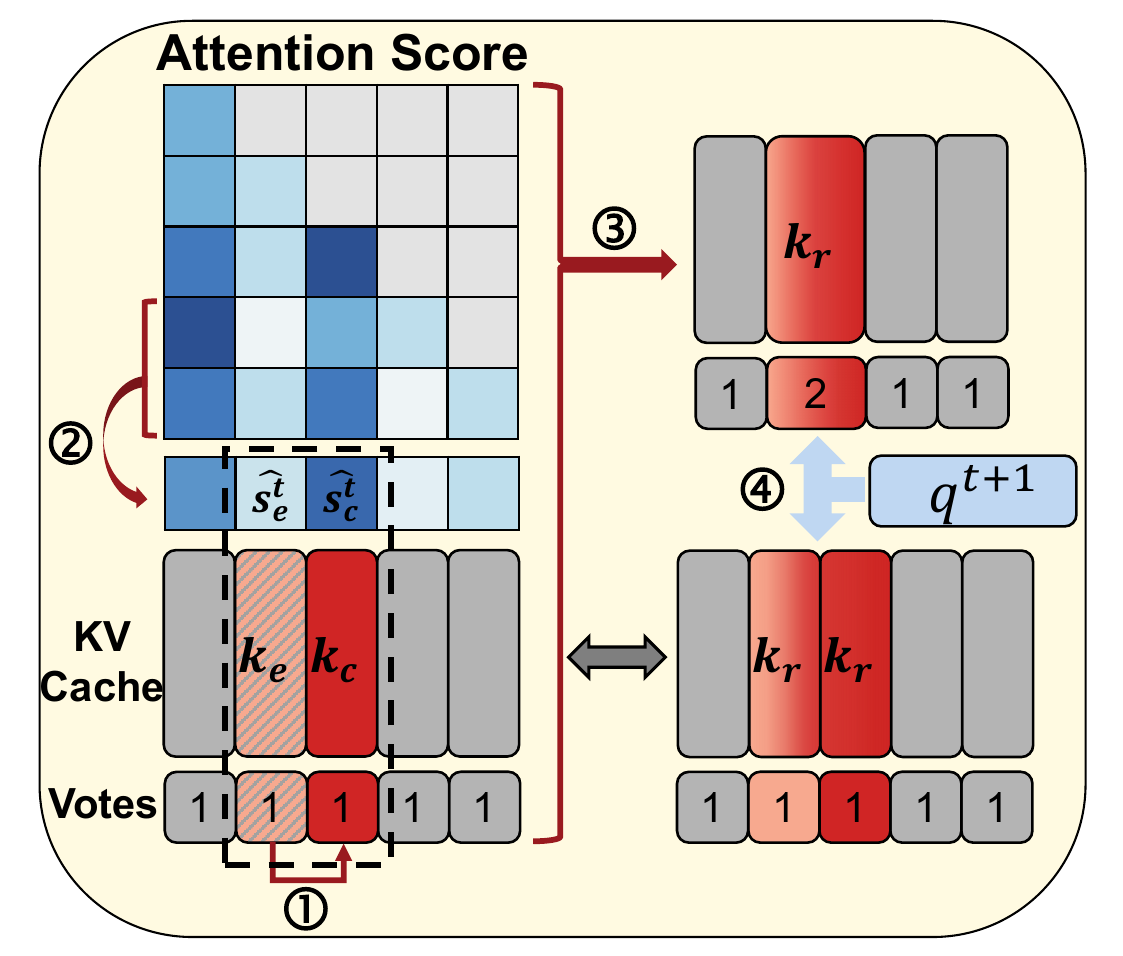}
  \end{minipage}%
  \hfill
  \begin{minipage}[c]{0.53\textwidth}
    \raggedright
    \hrule
    \textbf{Algorithm 1} KeepKV Merging at $t$-th Step\\[2pt]
    \hrule
    \begin{algorithmic}[1]
        \STATE \textbf{Input:} Attention scores $s^t$, EMA scores $S^{t-1}$, KV cache
        \STATE Let $K_e$ denote the to-be-evicted cache 
        \STATE Let $K_c$ denote the retained cache
        \STATE $U = \text{cosineSimilarity}(K_e,K_c)$
        \STATE For each $k_e \in K_e$, select $k_c = \operatorname*{Argmax}_{k_c\in K_c}(U^{e}),\ U_{e,c}>T$
        \STATE $\hat{s^{t}} = \text{updateEMA}(S^{t-1},s^t)$ \hfill $\triangleright$ Eq.~(8)
        \STATE \textbf{Merge:}
        \STATE \quad $k_r = \frac{\ln{((p_{e}\hat{s_{e}^t}+p_{c}\hat{s_{c}^t})/(p_{e}+p_{c}))}}%
                           {p_{e}\hat{s_{e}^t}\ln{\hat{s_{e}^t}}+p_{c}\hat{s_{c}^t}\ln{\hat{s_{c}^t}}}%
                           (p_{e}\hat{s_{e}^t}k_{e}+p_{c}\hat{s_{c}^t}k_{c})$
        \STATE \quad $v_r = \frac{1}{p_{e}\hat{s_{e}^t}+p_{c}\hat{s_{c}^t}}%
                           (p_{e}\hat{s_{e}^t}v_{e}+p_{c}\hat{s_{c}^t}v_{c})$
        \STATE \quad $p_r = p_e+p_c$
        \STATE Discard $(k_e,v_e), (k_c,v_c)$ and insert $(k_r,v_r)$ into KV cache
        \STATE \textbf{Output:} Updated KV cache
    \end{algorithmic}
    \hrule
  \end{minipage}

  \caption{Illustrative example of \AlgName{}. (0) $(k_e, v_e)$ is selected for eviction
  by specific compression method. (1) The retained KV with the highest cosine similarity,
  $(k_c, v_c)$, is selected. (2) EMA attention scores are updated. (3) ZIP-Merging is performed.
  (4) Consequently, with the Electoral Votes, the compressed KV can preserve the influence
  of the original KV in attention computations.}
  \label{fig:Meth-algo_process}
\end{figure*}

\textbf{Similarity-driven merging.} Lemma \ref{lemma:RE_towards_0} shows that output perturbation decreases as prediction error $\epsilon$ reduces, and closer merging objects result in lower perturbation. Clearly, if the merged KV pairs are identical, retaining one pair and setting its Electoral Votes to 2 introduces no error in subsequent computations. This provides a theoretical justification for prior merging strategies favoring high-similarity KV pairs \cite{wan2024D2O,wang2024KVMerger}. Following this, we merge each evicted KV pair with the retained one having the highest cosine similarity of keys, using a predefined threshold \( T \) to determine whether merging should occur, avoiding the overhead of dynamic adjustments like D2O \cite{wan2024D2O}. Furthermore, we observe that, during the prefill stage, reversing the conventional order—by first merging based on key similarity and then applying the eviction strategy, instead of merging after eviction as commonly done—can improve generation quality.

We present the workflow of \AlgName{} in Figure \ref{fig:Meth-algo_process}. Notably, \AlgName{} imposes no specific constraints on cache allocation or token selection strategies. It can directly integrate with common token selection methods by designating the merging pairs based on their eviction and retention sets, and it is also compatible with various cache allocation strategies. Thus, \AlgName{} demonstrates strong adaptability and can be combined with a range of mainstream cache compression methods, significantly enhancing both compression capability and generation quality.

\section{Experiment} \label{sec:exp}

\subsection{Experiment Settings} \label{sec:exp:setting}
\textbf{Tasks} We evaluate \AlgName{} on datasets with standard and extended context lengths, covering question-answering, summarization, and synthetic tasks. Specifically, for question-answering, we utilize MathQA \cite{amini2019mathqa}, OpenBookQA \cite{mihaylov2018openbookqa} and other tasks from the lm-eval-harness framework \cite{lmevalharness}. For summarization, we employ the XSUM\cite{narayan2018xsum} and CNN/DailyMail\cite{nallapati2016cnndm} tasks provided by the HELM framework. To assess performance on long-context tasks, we adopt LongBench\cite{bai2024longbench}, which effectively examines the algorithm's compression capabilities across diverse subtasks, including single-document QA, multi-document QA and synthetic tasks.

\textbf{Models and baselines.} Our evaluation is based on several representative LLMs, including Llama-2 \cite{touvron2023llama2}, Llama-3 \cite{grattafiori2024llama3}, and Mistral \cite{jiang2023mistral7b}. We compare our method against multiple baseline approaches: representative cache eviction methods such as Streaming \cite{xiao2024StreamingLLM}, H2O \cite{zhang2023h2o} and PyramidInfer \cite{yang2024pyramidinfer}, and prominent cache merging methods including CaM \cite{zhang2024cam} and D2O \cite{wan2024D2O}. For all baselines, we follow the authors’ released implementations and hyperparameter settings in their papers.

\textbf{Implementation.} In our main experiments, we set the merging threshold $T$ to $0.8$. For token selection and cache allocation, we follow the strategy recommended by PyramidInfer \cite{yang2024pyramidinfer}, which allocates fixed cache budgets, making it simple and efficient. And it is sufficient to demonstrate the advantages of our algorithm. In contrast, D2O \cite{wan2024D2O} applies dynamic allocation based on extra computation after prefill phase for each sequence. We implement \AlgName{} using the Hugging Face Transformers \cite{wolf2020huggingfacestransformers} and experiments are by default conducted on NVIDIA A100 80GB GPUs.

\begin{figure*}[t]
    \centering

    \begin{subfigure}[b]{0.24\textwidth}
        \includegraphics[width=\textwidth]{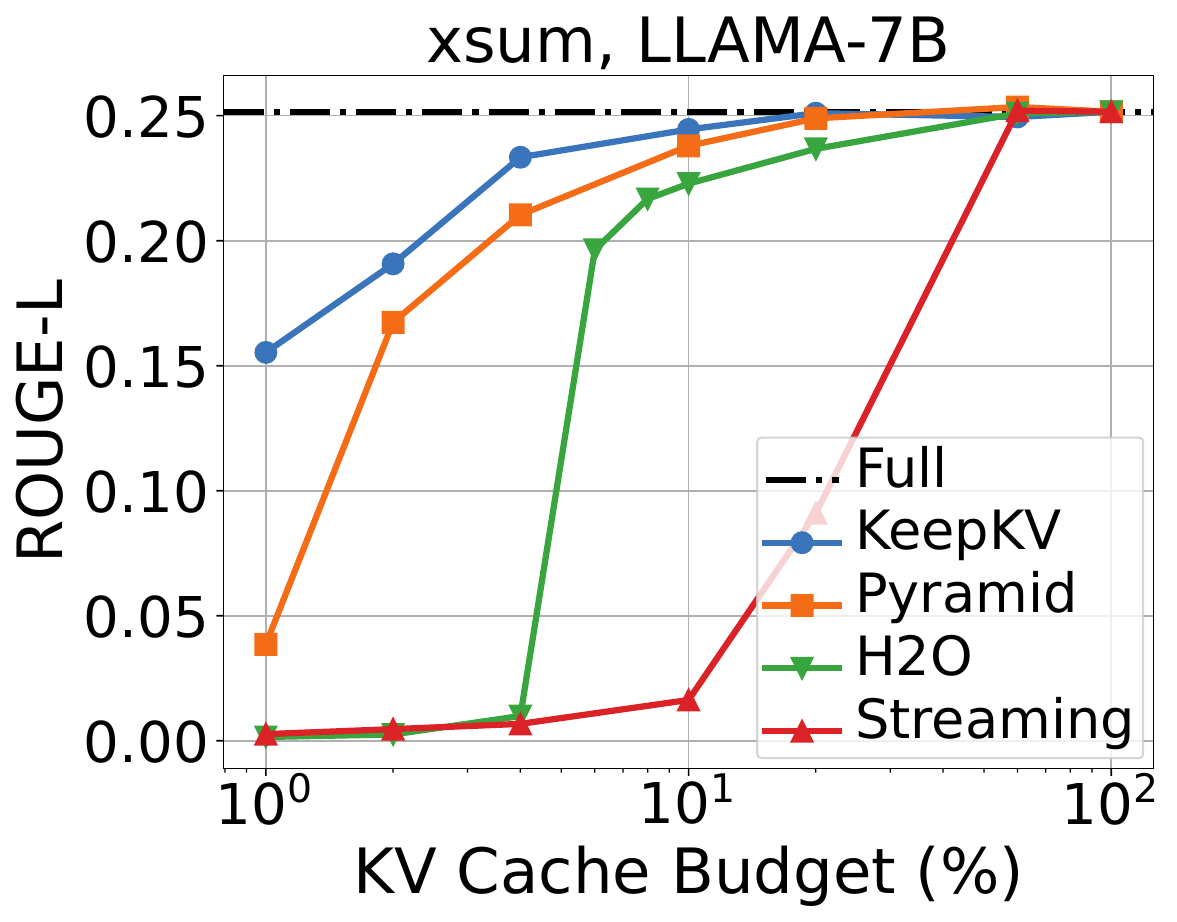}
    \end{subfigure}
    \begin{subfigure}[b]{0.24\textwidth}
        \includegraphics[width=\textwidth]{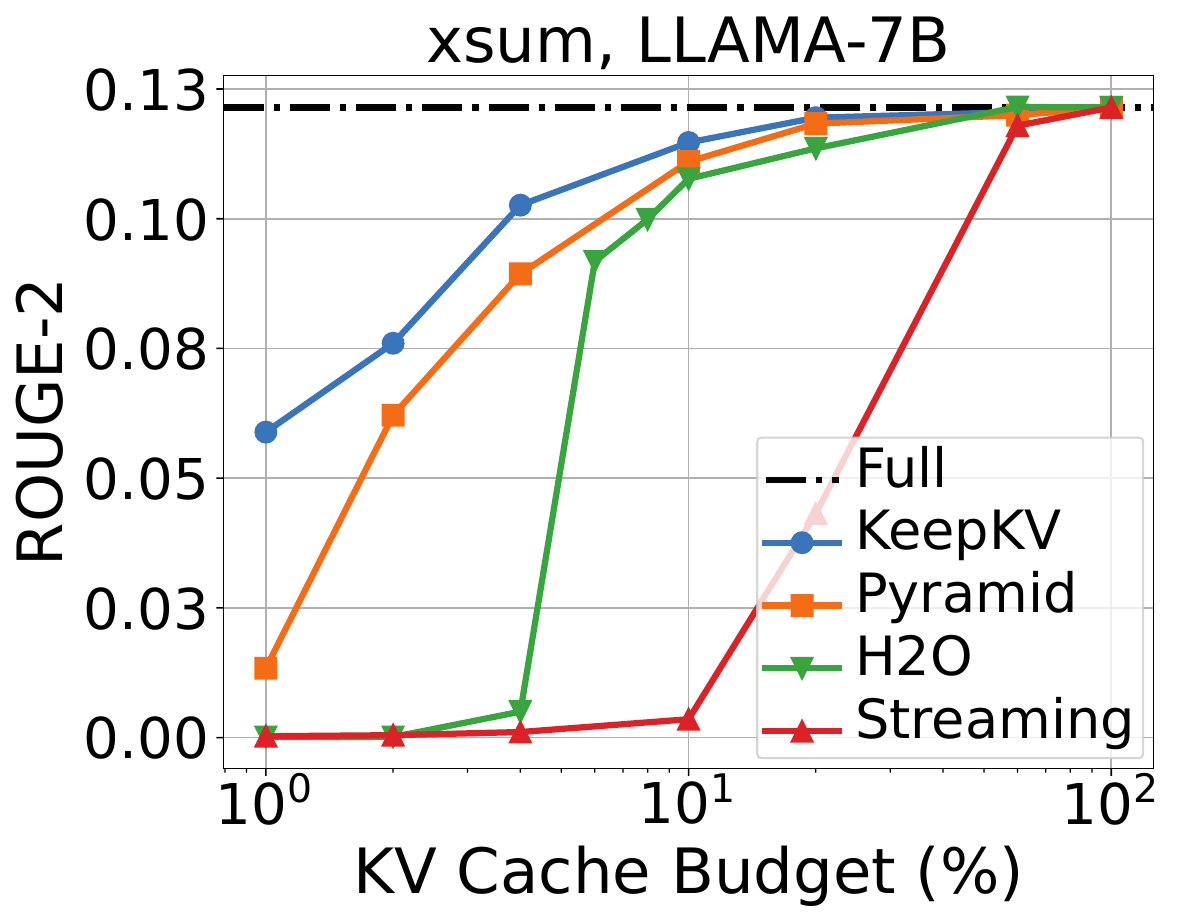}
    \end{subfigure}
    \begin{subfigure}[b]{0.24\textwidth}
        \includegraphics[width=\textwidth]{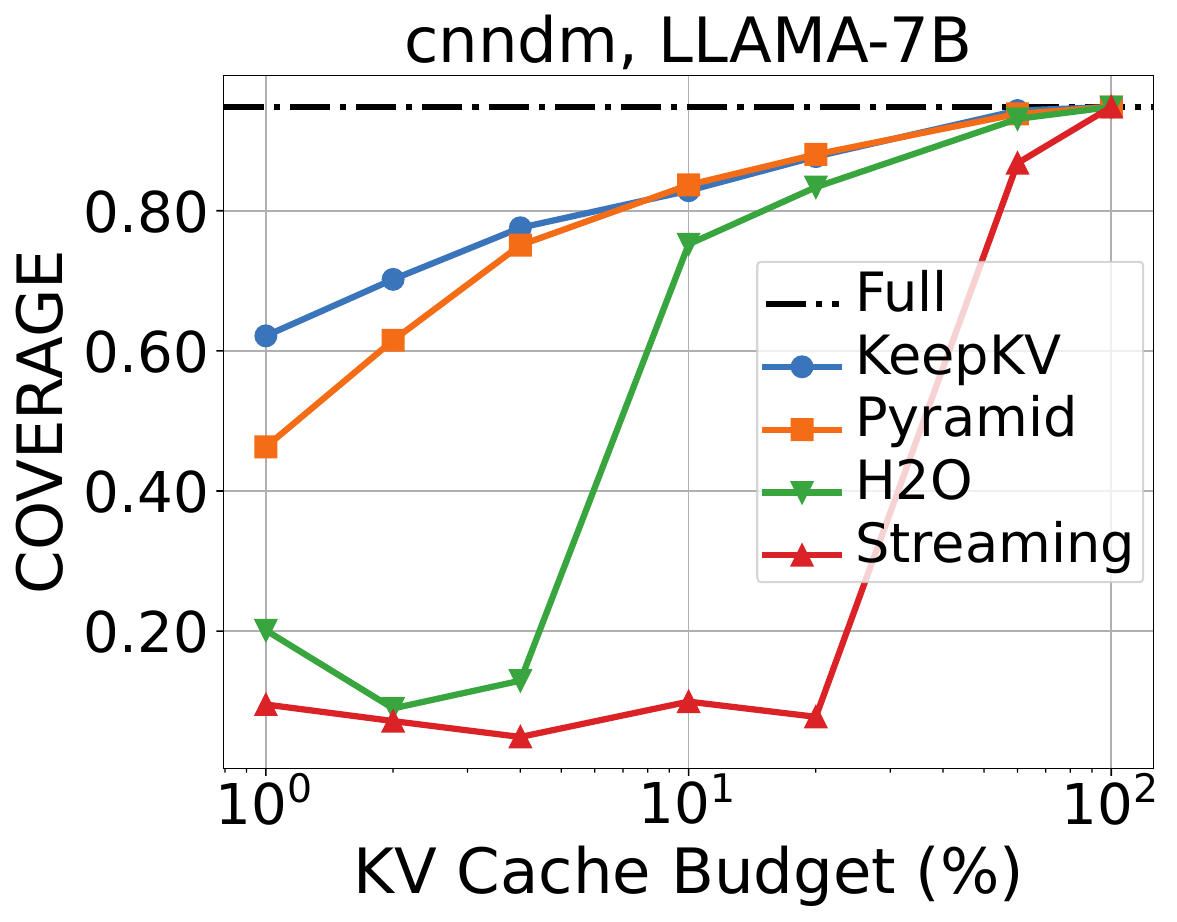}
    \end{subfigure}
    \begin{subfigure}[b]{0.24\textwidth}
        \includegraphics[width=\textwidth]{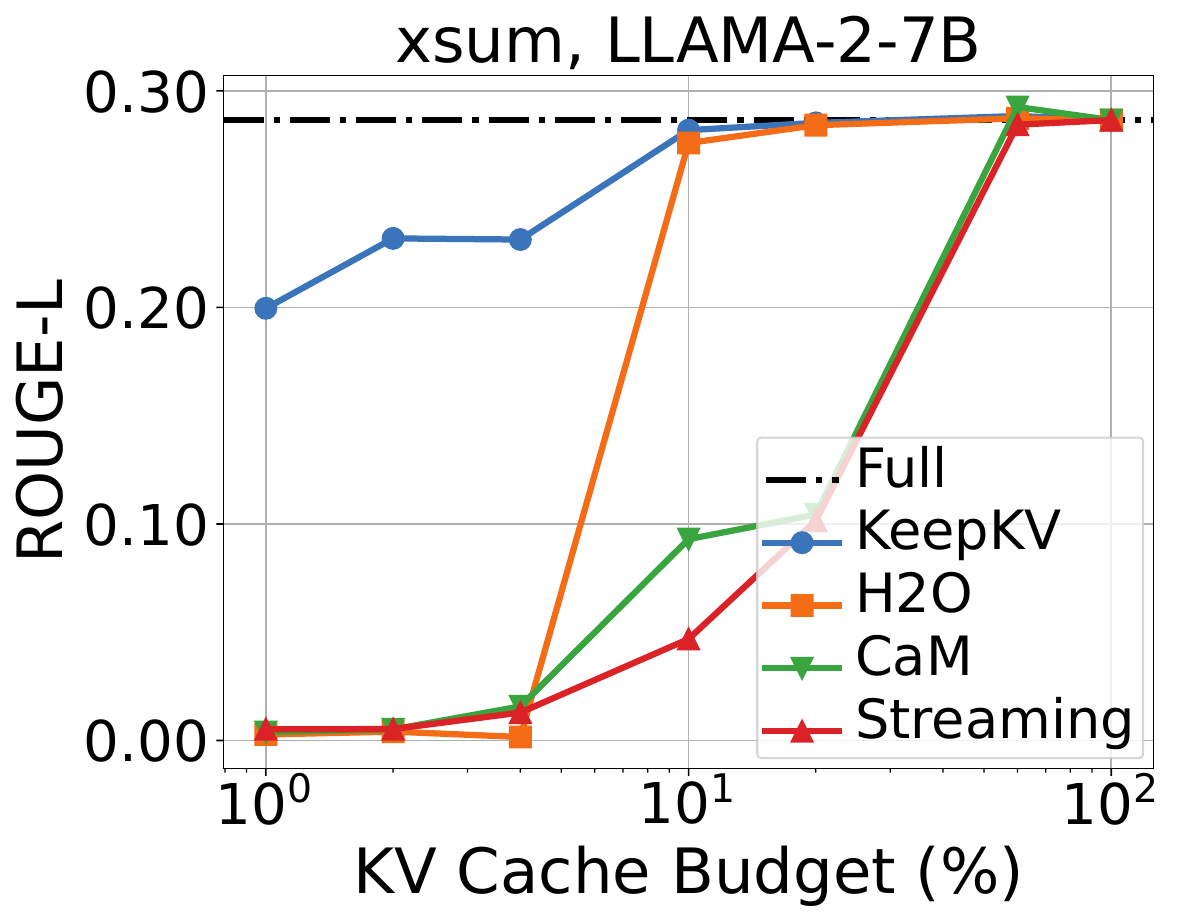}
    \end{subfigure}
    
    \begin{subfigure}[b]{0.24\textwidth}
        \includegraphics[width=\textwidth]{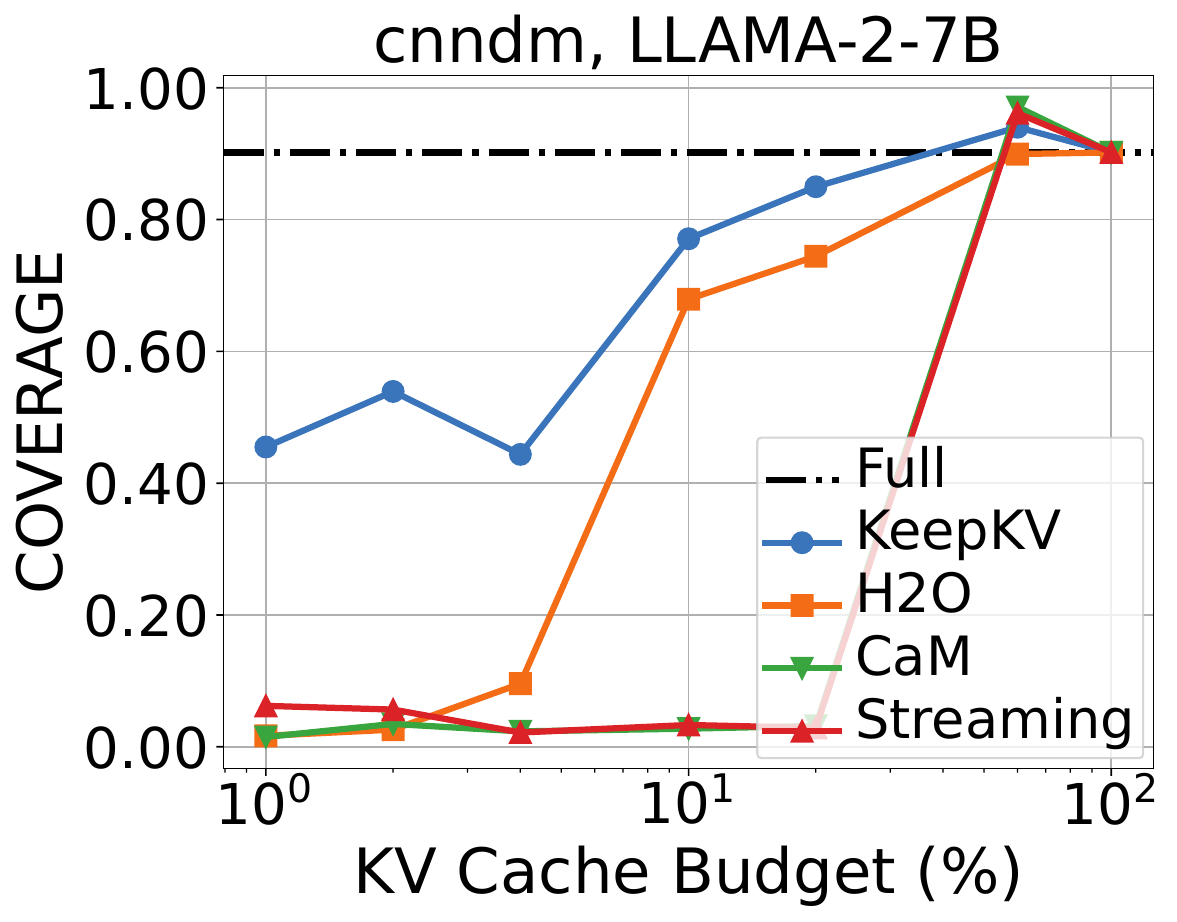}
    \end{subfigure}
    \begin{subfigure}[b]{0.24\textwidth}
        \includegraphics[width=\textwidth]{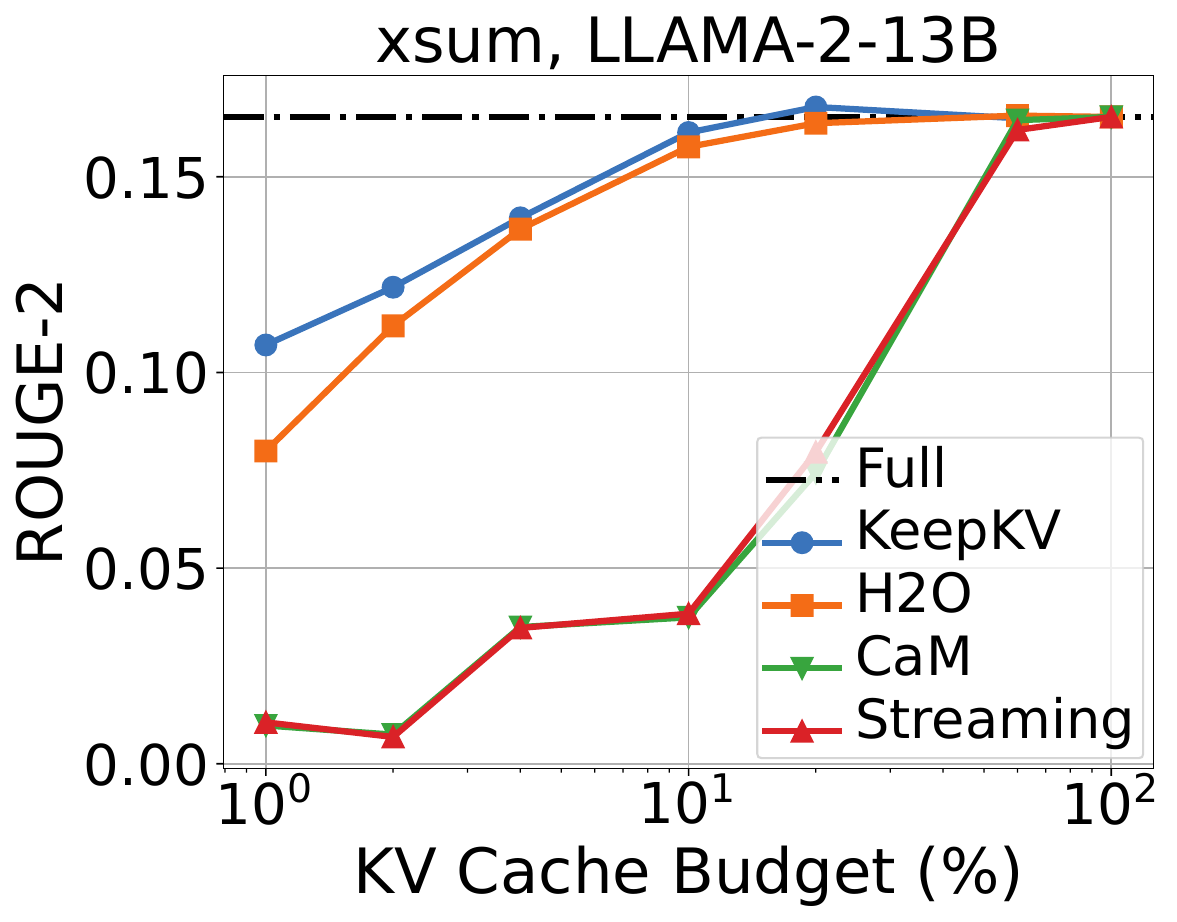}
    \end{subfigure}
    \begin{subfigure}[b]{0.24\textwidth}
        \includegraphics[width=\textwidth]{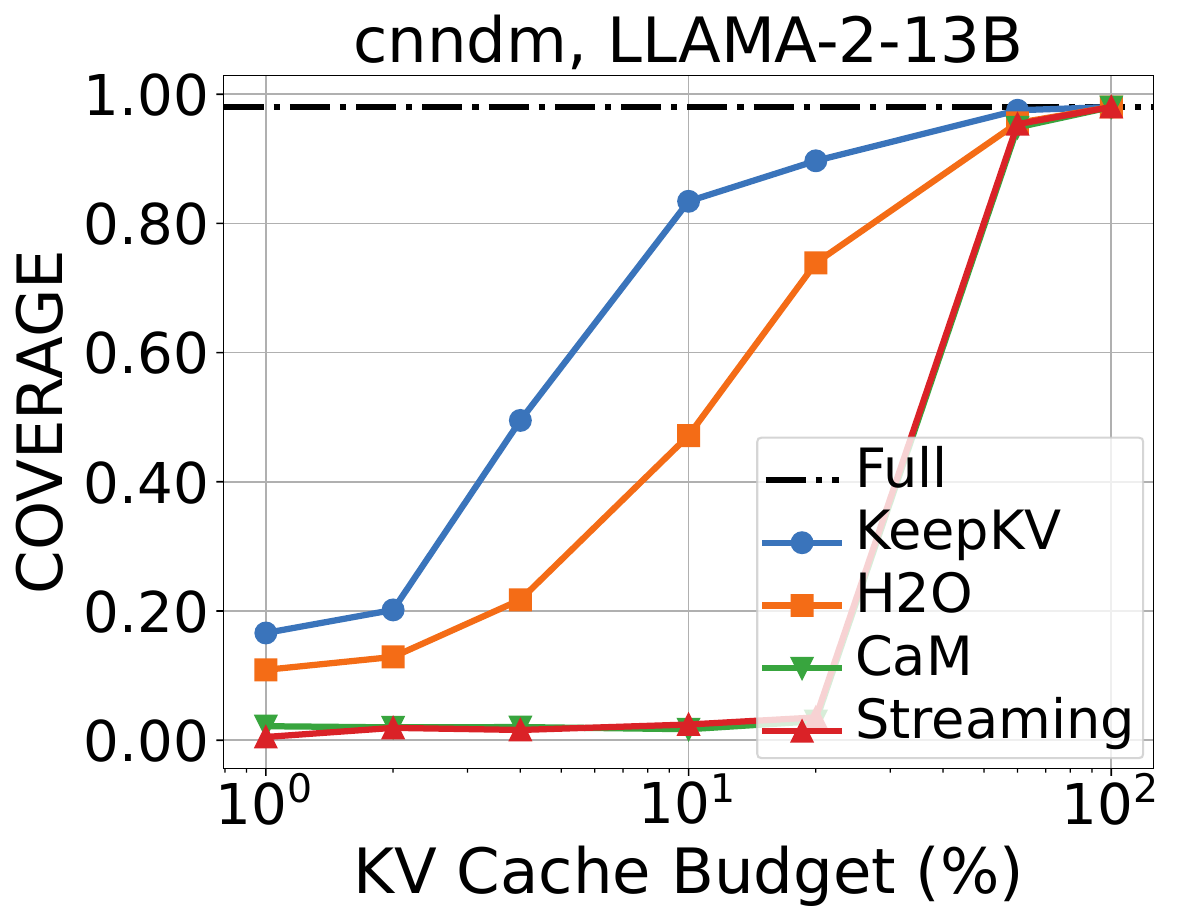}
    \end{subfigure}
    \begin{subfigure}[b]{0.24\textwidth}
        \includegraphics[width=\textwidth]{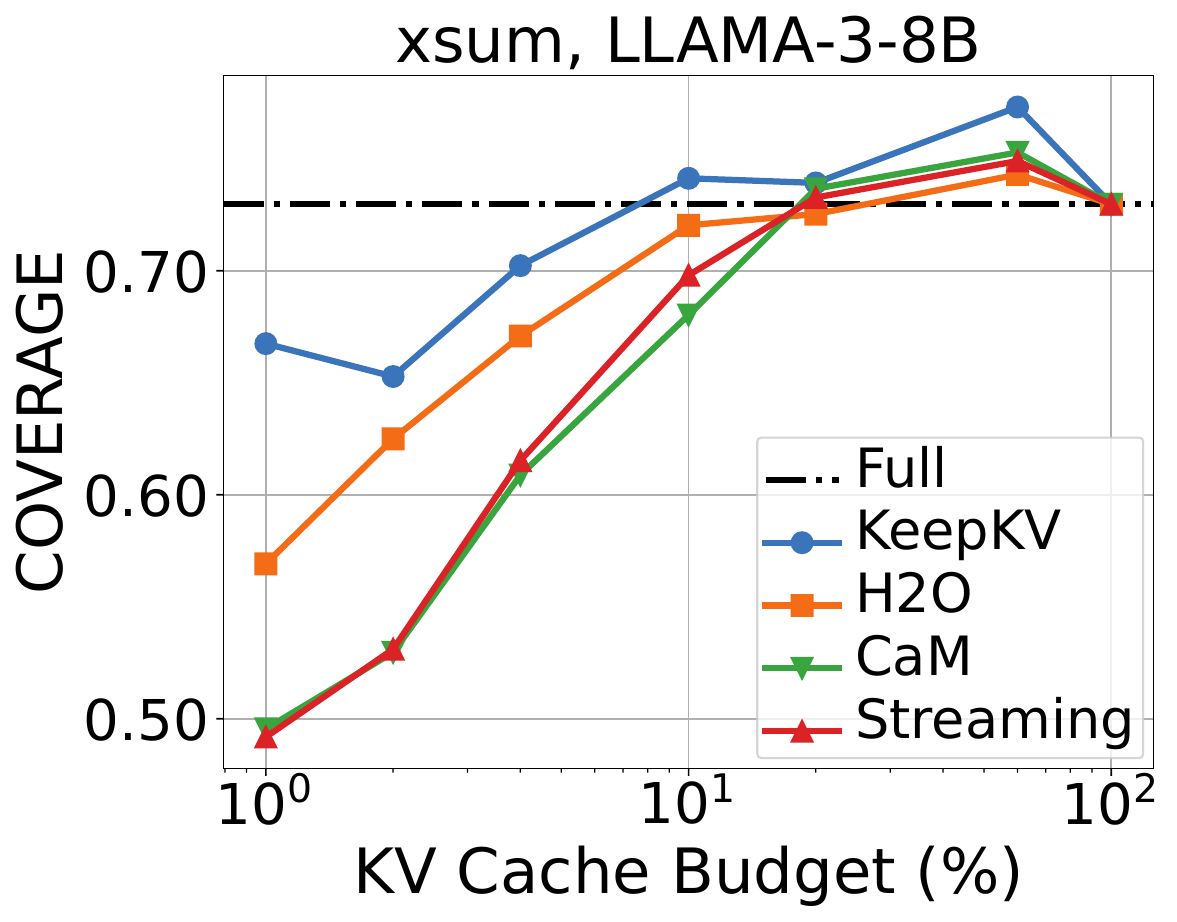}
    \end{subfigure}

    \begin{subfigure}[b]{0.24\textwidth}
        \includegraphics[width=\textwidth]{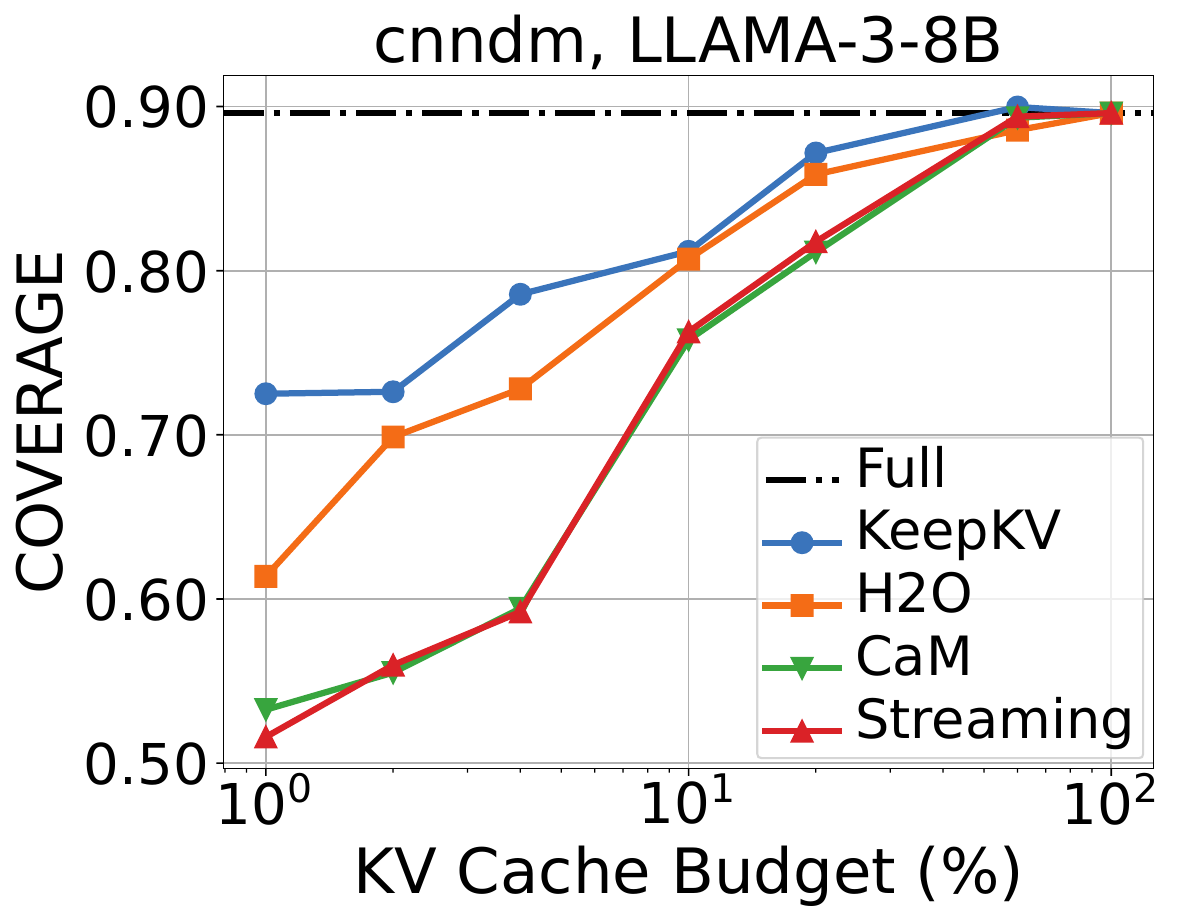}
    \end{subfigure}
    \begin{subfigure}[b]{0.24\textwidth}
        \includegraphics[width=\textwidth]{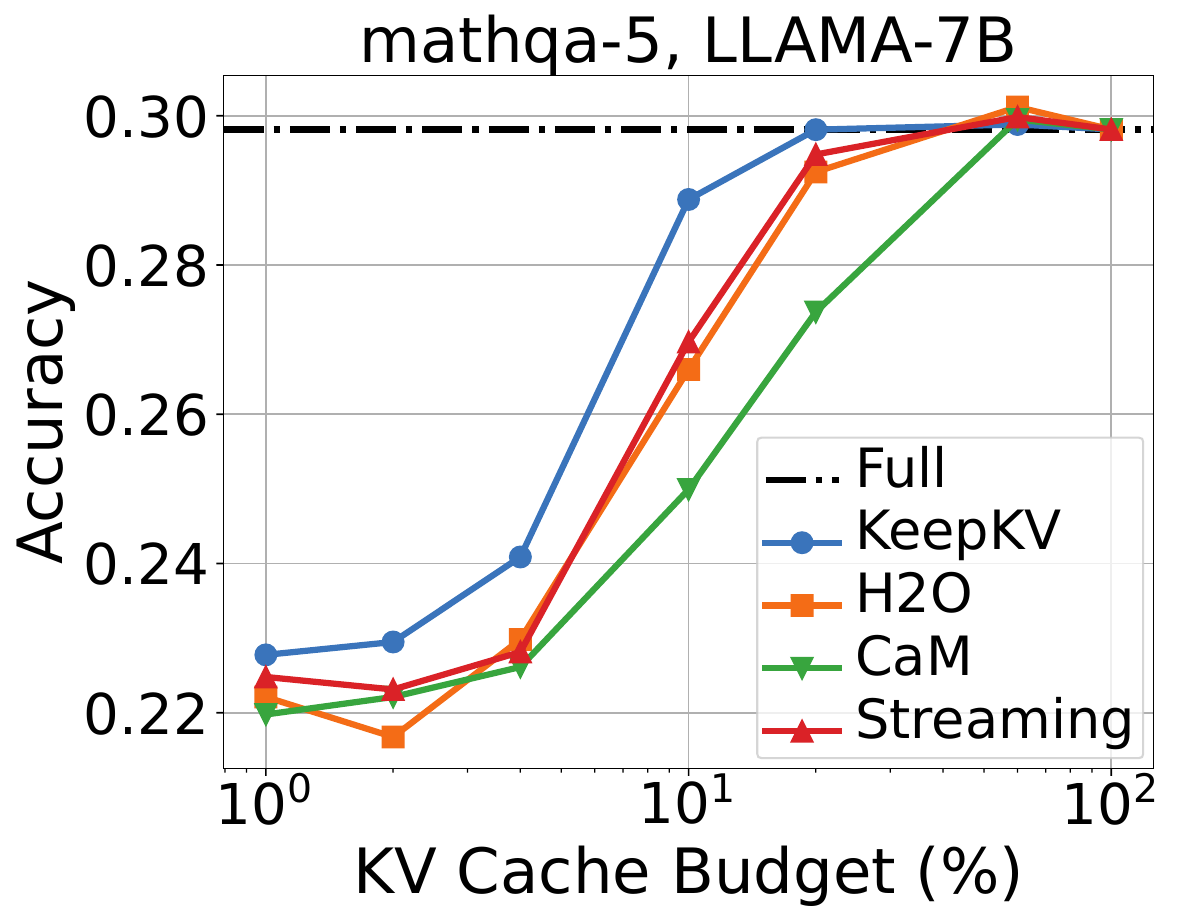}
    \end{subfigure}
    \begin{subfigure}[b]{0.24\textwidth}
        \includegraphics[width=\textwidth]{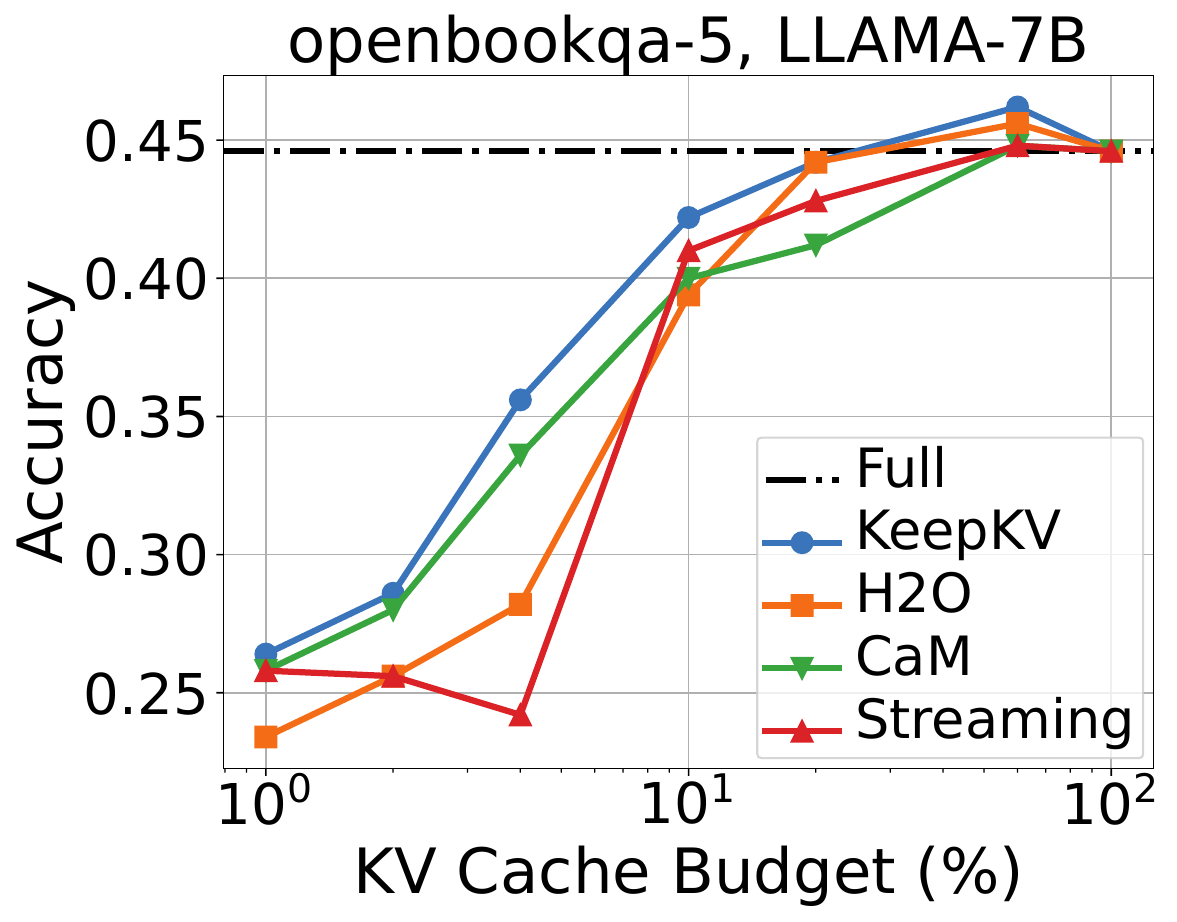}
    \end{subfigure}
    \begin{subfigure}[b]{0.24\textwidth}
        \includegraphics[width=\textwidth]{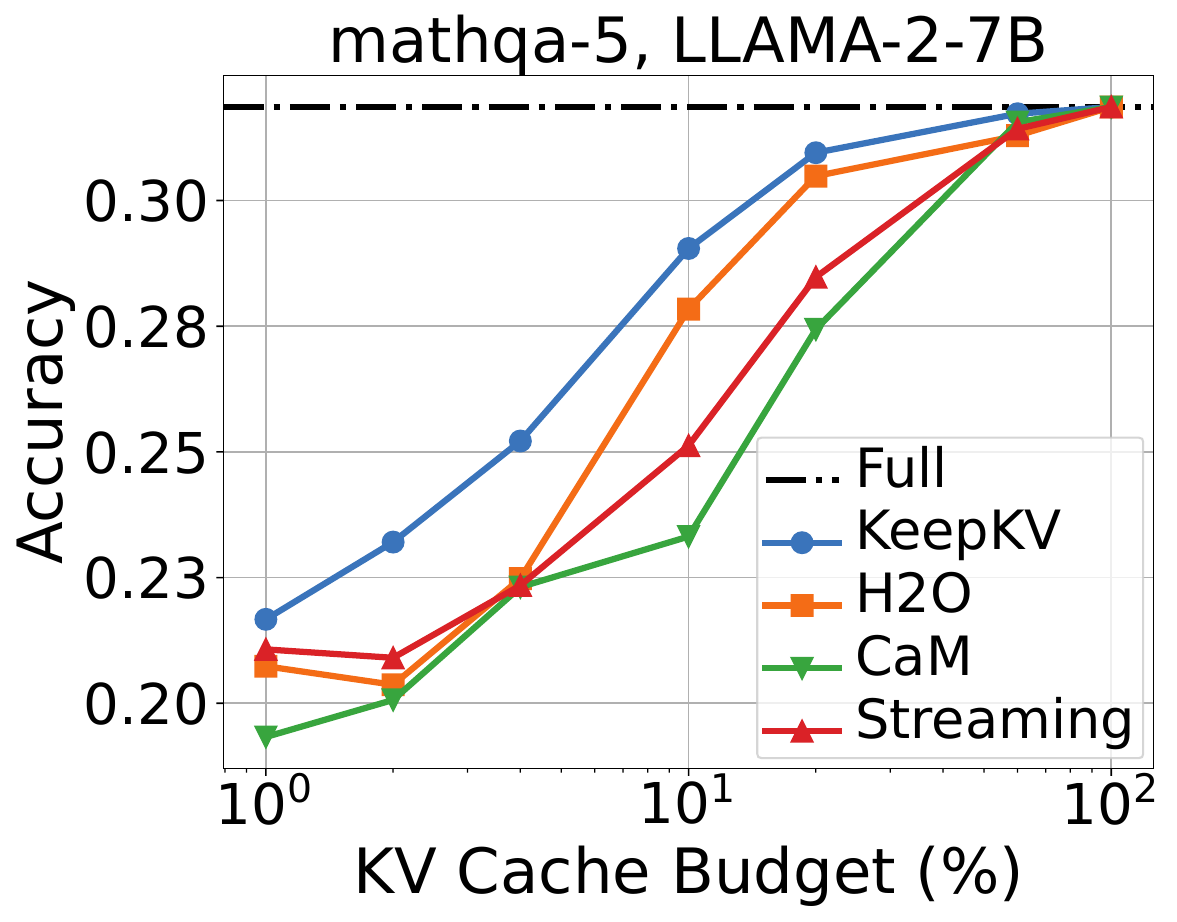}
    \end{subfigure}
    
  \caption{Performance of KeepKV and other methods for LLama backbones on HELM and LM-Eval evaluations.}
  \label{fig:Exper-compress_ratio_res}
\end{figure*}

\subsection{Accuracy on KV Cache Compression Ratios}

In Figure \ref{fig:Exper-compress_ratio_res}, we benchmark \AlgName{} on both the lm-eval-harness and HELM frameworks, comparing the fully cached KV version against multiple KV cache compression methods, including our proposed \AlgName{}. The x-axis represents the compression ratio, defined as the ratio between the compressed KV cache budget and the prompt length $L$. The results demonstrate that \AlgName{} consistently outperforms all other compression methods across various compression ratios. Particularly at extremely low compression rates, \AlgName{} achieves significantly better performance, highlighting its superior compression capability to retain maximal information within highly constrained memory budgets while effectively minimizing output perturbations introduced by compression.

\subsection{Accuracy on Long-context Tasks}

We evaluate \AlgName{} on the LongBench across Llama and Mistral model families,including Llama-2-7B, Llama-2-13B, Llama-3-8B and Mistral-7B, as shown in Table \ref{tab:app:longbench-res}. 
The evaluation tasks include Single-Document QA, Multi-Document QA, Summarization, Synthetic, and Code. The results indicate that \AlgName{} achieves performance closer to the full-cache baseline on most tasks, maintaining high generation quality despite limited cache availability. Notably, \AlgName{} significantly outperforms eviction-based methods, such as Local Window, StreamingLLM \cite{xiao2024StreamingLLM}, and H2O \cite{zhang2023h2o}. Furthermore, \AlgName{} also surpasses existing KV-cache merging methods, like CaM\cite{zhang2024cam} and D2O\cite{wan2024D2O}, underscoring the effectiveness of our carefully designed merging strategy in enhancing output accuracy.

\begin{table}[t]
\centering
\setlength{\tabcolsep}{4pt} 
\begin{tabular}{l|c|c}
\hline
Methods & Batch Size & Throughput (tokens/s) \\
\hline
Full cache & 2   & 116.54  \\
H2O        & 8   & 317.33  \\
D2O        & 8   & 214.8   \\
KeepKV     & 8   & 255.99  \\
\hline
\end{tabular}
\caption{Throughput comparison of \AlgName{} and other methods (4k context, 20\% compression ratio).}
\label{tab:app:throughput}
\end{table}

\definecolor{myblue}{RGB}{220,240,250}

\begin{table*}[t]
\centering
\small
\setlength{\tabcolsep}{2pt} 
\begin{tabular}{l|ccc|ccc|ccc|cc|cc}
\hline
\multirow{2}{*}{Methods} & 
\multicolumn{3}{c|}{Single-Doc QA} & 
\multicolumn{3}{c|}{Multi-Doc QA} & 
\multicolumn{3}{c|}{Summarization} & 
\multicolumn{2}{c|}{Synthetic} & 
\multicolumn{2}{c}{Code} \\
& NrtvQA & Qasper & MF-en & HotpotQA & 2WikiMQA & Musique & TREC & TriviaQA & SAMSum & PCount & PRe & Lcc & RB-P \\
\hline
\multicolumn{14}{c}{\textbf{Llama-2-7B}} \\
\midrule
\rowcolor{myblue} Full Model & 15.8 & 9.39	& 22.09	& 8.56 & 10.85	& 4.3 & 65.0 & 89.64 & 34.16	& 1.0 & 8.29 & 66.77 & 60.1 \\
Local Window & 2.22 & 9.29 & 1.83	& 5.14 & 7.18 & 1.02 & 17.5 & 4.07 & 3.17 & 1.5 & 2.58 & 16.31 & 15.35 \\
StreamingLLM & 11.81 & 5.18 & 19.26 & 7.07 & 10.48 & 3.71 & 55.5 & 87.31 & 31.84 & 1.5 & 4.29 & 63.79 & 56.07 \\
H$_2$O & 16.54 & 7.57 & 20.61 & 7.68 & 9.28 & 4.09 & \textbf{64.0} & 87.98 & 33.62 & 1.34 & 9.14 & 65.34 & 58.49 \\
CaM & 11.79 & 5.1 & 19.12 & 7.26 & \textbf{10.48} & 3.64 & 56.0 & 87.31 & 31.85 & 1.5 & 4.29 & 63.66 & 55.98 \\
D$_2$O & 16.04 & 6.54 & 19.48 & 8.14 & 10.12 & 4.62 & 63.5 & 88.39 & \textbf{34.1} & 1.39 & 7.54 & 65.8 & \textbf{59.44} \\
Ours & \textbf{17.32} & \textbf{7.48} & \textbf{22.2} & \textbf{8.51} & 9.72 & \textbf{4.65} & 60.5 & \textbf{88.87} & 33.2 & \textbf{2.23} & \textbf{8.45} & \textbf{65.9} & 56.36 \\
\midrule

\multicolumn{14}{c}{\textbf{Llama-2-13B}} \\
\midrule
\rowcolor{myblue} Full Model & 12.64 & 8.61 & 19.82 & 9.1 & 10.98 & 5.8 & 69.5 & 87.04 & 41.89 & 2.0 & 6.03 & 67.08 & 57.53 \\
Local & 4.95 & 5.11 & 3.82 & 7.05 & 9.87 & 3.42 & 19.0 & 7.83 & 2.63 & 1.17 & 6.51 & 16.7 & 14.65 \\
StreamingLLM & 5.04 & 5.75 & 12.24 & 9.4 & 10.47 & 4.71 & 57.0 & 82.48 & 37.21 & 1.5 & 5.04 & 61.47 & 50.84 \\
H2O & \textbf{13.83} & 6.41 & 15.52 & 9.04 & 9.55 & 5.53 & 66.0 & 86.08 & 40.2 & \textbf{2.88} & 7.37 & 64.52 & 55.46 \\
CaM & 5.16 & 5.95 & 12.31 & 9.19 & 10.52 & 4.66 & 57.0 & 82.48 & 37.28 & 2.5 & 5.25 & 61.75 & 50.71 \\
D2O & 12.76 & 6.53 & 14.87 & 8.59 & 10.34 & 5.75 & 66.5 & \textbf{86.52} & 40.52 & 2.0 & 6.99 & \textbf{65.23} & 55.84 \\
Ours & 12.09 & \textbf{6.89} & \textbf{17.81} & \textbf{9.49} & \textbf{10.54} & \textbf{5.79} & \textbf{66.8} & 82.72 & \textbf{41.35} & 1.75 & \textbf{7.55} & 64.81 & \textbf{56.29} \\
\midrule

\multicolumn{14}{c}{\textbf{Llama-3-8B}} \\
\midrule
\rowcolor{myblue} Full Model & 14.34 & 13.68 & 21.7 & 9.42 & 10.75 & 6.99 & 72 & 90.7 & 45.13 & 3.74 & 6.72 & 70.54 & 66.04 \\
Local & 2.14 & 6.69 & 5.17 & 6.16 & 5.0 & 2.42 & 34.25 & 30.5 & 10.66 & 2.36 & 2.0 & 28.91 & 24.52 \\
StreamingLLM & 10.43 & 7.84 & 13.85 & 9.18 & 10.44 & 5.47 & 61.0 & 90.37 & 44.35 & 2.6 & 10.5 & 68.49 & 63.94 \\
H2O & \textbf{13.73} & 10.02 & 17.2 & 9.31 & 10.62 & 6.42 & 63.3 & 90.44 & 45.02 & 3.29 & 7.56 & 68.95 & 63.84 \\
CaM & 10.43 & 7.83 & 13.89 & 9.11 & 10.37 & 5.47 & 61.0 & 90.37 & 44.31 & 3.16 & 10.5 & 68.59 & 64.04 \\
D2O & 13.5 & 8.86 & 17.21 & 9.16 & 10.52 & 6.35 & \textbf{65.5} & \textbf{90.52} & 44.64 & 3.44 & 5.8 & 68.49 & 64.84 \\
Ours & 12.76 & \textbf{10.63} & \textbf{18.57} & \textbf{9.37} & \textbf{10.72} & \textbf{6.53} & 64.5 & 90.33 & \textbf{45.2} & \textbf{3.54} & \textbf{7.16} & \textbf{69.05} & \textbf{65.68} \\
\midrule

\multicolumn{14}{c}{\textbf{Mistral-7B}} \\
\midrule
\rowcolor{myblue} Full Model & 22.92 & 39.74 & 51.46 & 43.28 & 39.46 & 25.59 & 74.0 & 88.64 & 46.97 & 4.0 & 63.5 & 61.42 & 58.72 \\
Local & 16.89 & 16.92 & 21.11 & 23.33 & 22.49 & 10.23 & 58.5 & 81.29 & 36.3 & 2.1 & \textbf{7.71} & 41.1 & 47.88 \\
StreamingLLM & 16.76 & 17.28 & 21.41 & 24.16 & 22.54 & 10.72 & 60.3 & 82.21 & 37.43 & 2.14 & 7.67 & 51.19 & 47.94 \\
H2O & 18.06 & 16.75 & 22.28 & 24.77 & 21.68 & 8.86 & 61.0 & 83.03 & 30.34 & 2.15 & 5.76 & 56.5 & 49.88 \\
CaM & 16.46 & 17.26 & 21.4 & 25.66 & 22.54 & \textbf{10.72} & 59.17 & 82.21 & 37.33 & 2.14 & 7.67 & 51.01 & 47.89 \\
D2O & \textbf{18.58} & 15.92 & 21.71 & 26.41 & 21.68 & 9.07 & 61.5 & 83.12 & 39.5 & 2.18 & 7.3 & 57.51 & 50.59 \\
Ours & 18.16 & \textbf{17.95} & \textbf{22.93} & \textbf{26.56} & \textbf{23.18} & 9.42 & \textbf{62} & \textbf{83.47} & \textbf{39.7} & \textbf{2.19} & 7.26 & \textbf{58.9} & \textbf{50.71} \\
\hline
\end{tabular}
\caption{Performance evaluation of \AlgName{} on various models in LongBench benchmarks (20\% compression ratio).}
\label{tab:app:longbench-res}
\end{table*}

\subsection{Throughput Analysis}

Our experiments demonstrate that \AlgName{} significantly enhances the inference throughput of the model by efficiently compressing the KV Cache, as illustrated in Table \ref{tab:app:throughput}. We conducted experiments on the Llama-2-7B model using an A100-80G GPU, with tasks derived from the LongBench evaluation framework. 
The experimental results indicate that various compression techniques improve throughput by reducing cache size and increasing batch size. Compared to the original full-cache method, KeepKV achieved over $2\times$ increase in throughput. It is noteworthy that, due to the additional computations, the throughput per request of merging methods is typically lower than that of classical eviction methods, such as H2O \cite{zhang2023h2o}. Nonetheless, \AlgName{} achieves higher throughput than the state-of-the-art (SOTA) merging-based algorithm, D2O \cite{wan2024D2O}. This advantage arises because D2O computes attention distribution variance for real-time cache allocation, whereas our method adopts a fixed cache allocation strategy to emphasizing the generalizability of \AlgName{}.

\subsection{Ablation Study}

To evaluate the generalizability of \AlgName{}, we conducted ablation experiments combining \AlgName{} with existing state-of-the-art eviction methods. Since \AlgName{} does not impose specific requirements on cache allocation or eviction/preservation strategies, it can be directly integrated with commonly used eviction/preservation policies. This only requires setting the merged parties as the eviction and preservation sets determined by their respective algorithms; it can also be applied with various cache allocation strategies, simply by modifying the cache configurations between layers and attention heads. As shown in Figure \ref{fig:ablation-evict}, we combined \AlgName{} with existing mainstream eviction methods, H2O\cite{zhang2023h2o} and PyramidInfer\cite{yang2024pyramidinfer}, using the HELM evaluation framework. The results demonstrate that, with \AlgName{} incorporated, the methods outperform the original ones across all compression ratios. This proves that our algorithm is highly scalable and versatile, capable of being integrated with various eviction schemes to enhance their compression efficiency and generation quality.

\begin{figure}[t]
    \centering
    \begin{subfigure}[t]{0.48\linewidth}
        \centering
        \includegraphics[width=\linewidth]{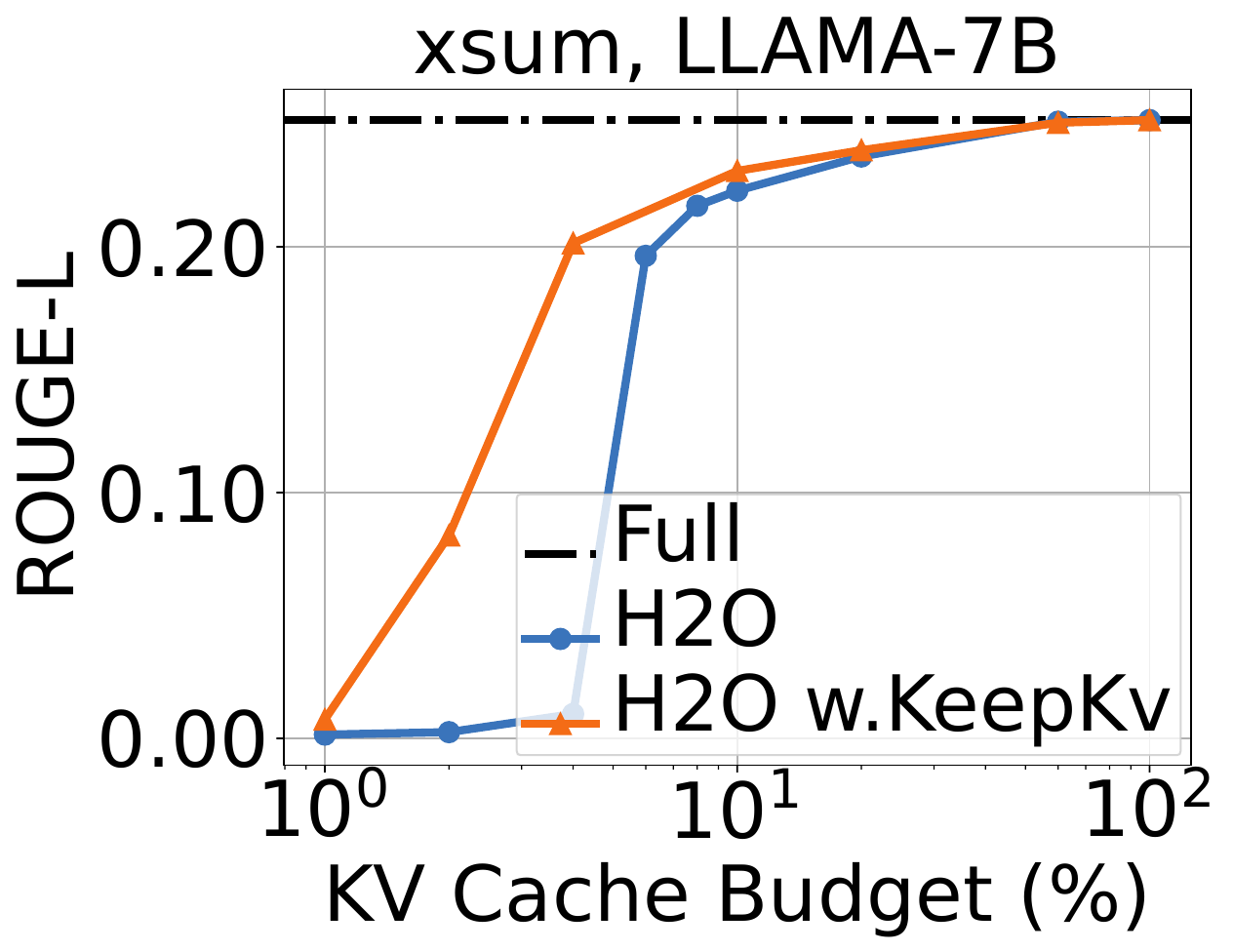}
        \caption{Combining with H2O.}
        \label{fig:h2owKeep}
    \end{subfigure}
    \hfill
    \begin{subfigure}[t]{0.48\linewidth}
        \centering
        \includegraphics[width=\linewidth]{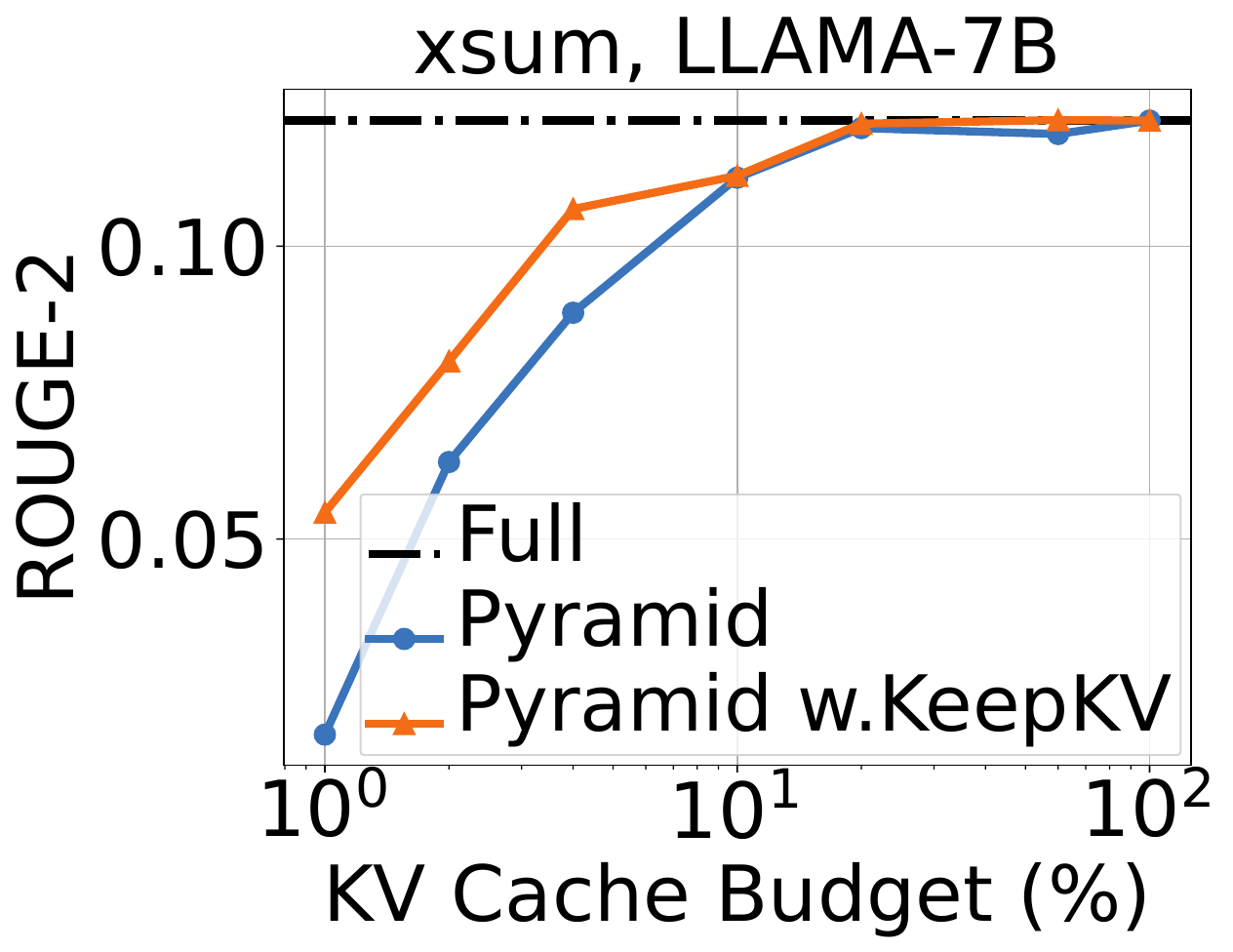}
        \caption{Combining with Pyramid.}
        \label{fig:pyrawKeep}
    \end{subfigure}
    \caption{Accuracy experiments combining \AlgName{} with existing eviction methods.}
    \label{fig:ablation-evict}
\end{figure}

\section{Conclusion} \label{sec:conclusion}

In this paper, we conduct a comprehensive analysis of the impact of KV cache compression on attention computation and propose \AlgName{}, which introduces the Electoral Votes mechanism and Zero Inference-Perturbation Merging to adaptively and dynamically merge the KV cache while minimizing output disturbance. \AlgName{} effectively preserves more information within limited memory, significantly mitigating the adverse effects of KV cache compression on generation quality. Our experiments demonstrate that \AlgName{} achieves performance closest to that of the full cache across various compression ratios. It also excels in both standard and long-context tasks. We believe \AlgName{} provides a novel perspective and a powerful tool for advancing KV cache compression methods, laying the foundation for efficient LLM inference.

\section*{Acknowledgements}
We are grateful to Chenhong He, Ruijie Miao, Yuhan Wu and Yanshu Wang from Peking University for their insightful discussions and helpful suggestions throughout the development of this research. We thank ByteDance Ltd. for providing technical support during the internship period.
This work was supported by the National Key Research and Development Program of China under Grant No. 2024YFB2906603, by the Beijing Natural Science Foundation (Grant No. QY25123), and in part by the National Natural Science Foundation of China (NSFC) (624B2005).

\bibliography{references}

\appendix

\section{Theoretical Analysis} \label{sec:app:thero}

Recently, many studies have analyzed KV cache compression strategies in LLM inference from a theoretical perspective \cite{zhang2023h2o,liu2024scissorhands,li2024SnapKV,yang2024pyramidinfer,zhang2024cam,wan2024D2O,wang2024KVMerger,gu2025attentionsink_empiricalview}. Overall, the primary objective of most existing works can be summarized as minimizing the impact of compression on the output. For instance, existing eviction-based methods and cache allocation strategies \cite{zhang2023h2o,liu2024scissorhands,reid2024RoCo,li2024SnapKV,yang2024pyramidinfer,cai2024pyramidkv,feng2024adakv,fu2024headkv,xiao2024duoattention} all aim to maximize the retention of essential information within limited memory by evicting less important tokens or reducing cache allocation in non-critical heads and layers based on empirical observations of attention distributions. However, eviction inevitably leads to irreversible information loss, which has motivated the development of KV cache merging methods \cite{zhang2024cam,wan2024D2O,wang2024KVMerger,nawrot2024dmc}. Despite this, key challenges such as the selection of merging candidates and the assignment of merging weights remain largely unexplored, with a lack of systematic theoretical foundations. In this work, we introduce a novel perspective distinct from prior approaches. We formulate the problem as eliminating output perturbation and derive a novel merging method by analyzing the attention computation process. First, we introduce Electoral Votes mechanism, making the elimination of output perturbation feasible. Then, we derive a merging computation formula to eliminate perturbation at the current step. Finally, we extend this framework to multi-step generation, providing a theoretical guarantee for output perturbation and offering a reasonable explanation for mainstream similarity-based merging candidates selection methods.

Specifically, we first demonstrate the unavoidable output perturbation caused by KV cache eviction. Next, we discuss the attention sag issue in existing KV cache merging methods and provide a formal proof. Then, we present the derivation process of our KeepKV merging method. Finally, we provide a theoretical guarantee for the output perturbation of KeepKV, including proofs for the main theorem and its associated lemma. The symbolic representation of the attention computation process remains consistent with the one introduced in the methodology section.

\subsection{Perturbation in KV Cache Eviction} \label{sec:app:kv_evcition}

Eviction methods discard KV pairs deemed unimportant. We denote the first generation step in the decoding phase as the $(L+1)$-th generation step, where $L$ represents the prompt length. And for a positive integer n, let $[n]:=\{1,2,...,n\}$. At $t$-th generation step, let $K_e = \{e_1 e_2,...,e_m\}, m\in [t]$ denoted the index of to-be-evicted cache. Based on Equation \ref{eq:attention_compute}, the output after eviction $(o'_t)$ is:

\begin{equation}
o'_t = \sum_{i=1,i\notin K_e}^{t}A_i^{'t}v_i,  \quad A_i^{'t} = \frac{s^t_i}{\sum_{i=1,i\notin K_e}^{t}s^t_i}.
\end{equation}

By transforming $o'_t$ towards $o_t$, we obtain:

\begin{equation}
\begin{aligned}
o'_t &= \frac{\sum_{i=1}^{t}s^t_i}{\sum_{i=1}^{t}s^t_i-\sum_{j\in K_e}^{t}s^t_j} * \frac{\sum_{i=1}^{t}s^t_iv_i - \sum_{j\in K_e}^{t}s^t_jv_j}{\sum_{i=1}^{t}s^t_i} \\
&= \frac{\sum_{i=1}^{t}s^t_i}{\sum_{i=1}^{t}s^t_i-\sum_{j\in K_e}^{t}s^t_j}\left(o_t-\frac{\sum_{j\in K_e}^{t}s^t_jv_j}{\sum_{i=1}^{t}s^t_i}\right) \\
&= \frac{\sum_{i=1}^{t}A^t_i}{\sum_{i=1}^{t}A^t_i-\sum_{j\in K_e}^{t}A^t_j}\left(o_t-\frac{\sum_{j\in K_e}^{t}A^t_jv_j}{\sum_{i=1}^{t}A^t_i}\right) \\
&= \frac{1}{1-\sum_{j\in K_e}^{t}A^t_j}\left(o_t-\sum_{j\in K_e}^{t}A^t_jv_j\right).
\label{eq:app:evict_problem}
\end{aligned}
\end{equation}

Equation \ref{eq:app:evict_problem} indicates that the difference between $o'_t$ and $o_t$ decreases as the attention score of the evicted KV $(A^t_j, j\in K_e)$ diminishes. When the total score of $K_e$ becomes negligible, the output perturbation at the current step approaches zero. This formally explains why eviction methods generally prioritize discarding KV pairs with lower attention scores. 

However, existing studies \cite{chen2024MagicPIG} have shown that attention can be relatively dispersed in certain tasks, meaning that evicting even a small number of tokens can have a non-negligible impact. Furthermore, as the compression ratio increases, evicted tokens will account for a significant portion of the attention scores, exacerbating the degradation of generation quality.

\subsection{Attention Sag in KV Cache Merging} \label{sec:app:kv_merging}

Merging methods integrate less important KV into others rather than discarding them directly. Specifically, mainstream studies select, for each KV pair to be evicted, a merging target among the preserved KVs, allowing many-to-one merges. Typically, weighted merging rather than direct averaging is used, with weights satisfying a normalization constraint, i.e., the merged vectors are obtained via convex combinations. Formally, merging the evicted pairs $(k_j, v_j), j\in K_e$ into a preserved pair $(k_c, v_c)$ yields a new KV pair $(k_r, v_r)$, defined as follows:

\begin{align}
k_r &= w_c k_c + \sum_{j \in K_e} w_j k_j, \notag \\
v_r &= w_c v_c + \sum_{j \in K_e} w_j v_j, \notag \\
\text{s.t.}&\quad w_c + \sum_{j \in K_e} w_j = 1
\label{eq:app:normal_merging}
\end{align}

Let $K'_e = K_e \cup \{c\}$, representing the index of the original KVs before merging. For instance, the weight $w_j$ in D2O \cite{wan2024D2O} is computed based on the cosine similarity between key vectors, whereas for KVmerger \cite{wang2024KVMerger}, it is calculated based on the Gaussian Kernel value. Formally, these are represented as follows:

\begin{align}
w_{j_{\text{D2O}}} &= \frac{\exp(\cos \theta_{\mathbf{k_j}, \mathbf{k_c}})}
{\sum_{j \in K'_e} \exp(\cos \theta_{\mathbf{k_j}, \mathbf{k_c}})}, \notag \\
w_{j_{\text{KVMerger}}} &= \frac{\exp\left(-\frac{||k_j - k_c||^2}{2\sigma^2}\right)}
{\sum_{j \in K'_e} \exp\left(-\frac{||k_j - k_c||^2}{2\sigma^2}\right)}.
\end{align}

However, the widely adopted convex combination approach also introduces output disturbances, as stated in the following theorem:

\begin{theorem}[Formal version of Theorem \ref{thm:attn_collapse}]
The merging method indicated by Equation \ref{eq:app:normal_merging} causes the attention score of the merged KV to become less than the sum of attention scores from the original multiple KVs merged into it, independently of the specific weighting scheme. Formally, this implies: ${A'}_r^{t} < \sum_{j\in K'_e}A^t_j$, ultimately leading to: $\left\| o'_t - o_t \right\| > 0$.
\end{theorem}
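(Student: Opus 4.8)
The plan is to show two things in sequence: first, that the merged attention score strictly undershoots the sum of the original scores, i.e. ${A'}_r^t < \sum_{j\in K'_e} A^t_j$; and second, that this deficit forces $\|o'_t - o_t\| > 0$. For the first part, I would work directly with the unnormalized scores $s_i^t = e^{q_t k_i/\sqrt d}$. The merged key is $k_r = w_c k_c + \sum_{j\in K_e} w_j k_j$ with $\sum_{j\in K'_e} w_j = 1$, so the merged score is $s_r^t = e^{q_t k_r/\sqrt d} = \exp\!\big(\sum_{j\in K'_e} w_j \cdot \tfrac{q_t k_j}{\sqrt d}\big)$. Since $\exp$ is strictly convex and the $w_j$ form a convex combination, Jensen's inequality gives $s_r^t \le \sum_{j\in K'_e} w_j\, s_j^t$, with equality iff all the exponents $q_t k_j/\sqrt d$ coincide. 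But $\sum_{j\in K'_e} w_j s_j^t \le \max_{j\in K'_e} s_j^t \le \sum_{j\in K'_e} s_j^t$, and in the generic case where the scores are not all equal the first inequality is strict. Hence $s_r^t < \sum_{j\in K'_e} s_j^t$. Dividing by the post-merge normalizer $Z' = \sum_{i\notin K'_e} s_i^t + s_r^t$ — which is itself strictly smaller than the pre-merge normalizer $Z = \sum_{i=1}^t s_i^t$ because $s_r^t < \sum_{j\in K'_e} s_j^t$ — I get ${A'}_r^t = s_r^t/Z' < \big(\sum_{j\in K'_e} s_j^t\big)/Z = \sum_{j\in K'_e} A_j^t$ once I confirm the normalizer ratio works in the claimed direction; the cleanest route is to compare $s_r^t/Z'$ against $(\sum_{j\in K'_e}s_j^t)/Z$ by cross-multiplying and using $Z' < Z$ together with $s_r^t < \sum_{j\in K'_e}s_j^t$.

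\textbf{From the score deficit to output perturbation.} For the second part, I would write $o'_t$ in the same style as Equation~\ref{eq:app:evict_problem}. Splitting the pre-merge sum into the retained-but-untouched indices $R = [t]\setminus K'_e$ and the merged group, we have $o_t = \tfrac1Z\big(\sum_{i\in R} s_i^t v_i + \sum_{j\in K'_e} s_j^t v_j\big)$, while $o'_t = \tfrac1{Z'}\big(\sum_{i\in R} s_i^t v_i + s_r^t v_r\big)$. Set $\delta := \sum_{j\in K'_e} s_j^t - s_r^t > 0$, so $Z' = Z - \delta$. Then a short manipulation expresses $o'_t - o_t$ as a linear combination of $\big(o_t - \bar v_{\text{merged}}\big)$-type terms, where $\bar v_{\text{merged}}$ is the (score-weighted or merge-weighted) average of the merged value vectors; concretely $o'_t - o_t = \tfrac1{Z'}\big[\delta\, o_t + s_r^t v_r - \sum_{j\in K'_e} s_j^t v_j\big]$ after clearing denominators. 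The key observation is that this vanishes only if $\delta\, o_t + s_r^t v_r = \sum_{j\in K'_e} s_j^t v_j$, which is a single fortuitous vector equation that fails for generic value vectors — and one can make this rigorous by noting $\delta > 0$ strictly, so $o'_t - o_t = 0$ would pin $o_t$ to a specific affine combination of the $v_j$'s and $v_r$, a measure-zero coincidence ruled out once the $v_j$ are not all collinear with $o_t$ in the required way. To keep the statement clean I would phrase the conclusion as: \emph{generically} (equivalently, unless the merged value vectors satisfy a degenerate linear relation), $\|o'_t - o_t\| > 0$, which matches the paper's informal claim that merging "inevitably" perturbs the output.

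\textbf{Main obstacle.} The first half (the Jensen/convexity argument for ${A'}_r^t < \sum A_j^t$) is robust and essentially unconditional given that the scores are not all identical. The delicate point is the second half: strictly speaking $\|o'_t - o_t\| > 0$ can fail on a measure-zero set of value configurations (e.g. if all merged $v_j$ are equal and happen to equal $o_t$), so the honest statement needs a genericity caveat or a mild non-degeneracy hypothesis. I expect the main work to be bookkeeping the normalizer change $Z \to Z' = Z-\delta$ carefully so that the sign of each term in $o'_t - o_t$ is transparent, and then arguing that the residual vector $\delta\,o_t + s_r^t v_r - \sum_{j\in K'_e} s_j^t v_j$ is nonzero under the stated (or generically satisfied) conditions; the algebra itself is routine once the decomposition into the $R$ part and the merged part is set up.
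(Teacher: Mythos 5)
Your proposal follows essentially the same route as the paper: your Jensen step is exactly the paper's weighted AM--GM inequality applied to $\exp$, your cross-multiplication for the normalizer comparison is an equivalent (slightly more laborious) version of the paper's observation that ${A'}_r^t = 1 - \sum_{i\notin K'_e}{A'}_i^t$ evaluated with a strictly smaller denominator, and your final step matches the paper's, which likewise writes $o'_t - o_t$ as a linear combination of value vectors with nonzero coefficients and rules out cancellation by a genericity appeal (near linear independence of high-dimensional value vectors). Your explicit caveat that $\left\| o'_t - o_t \right\| > 0$ can fail on a measure-zero set of value configurations is, if anything, more candid than the paper's own phrasing of that last step.
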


\begin{proof}
The attention score and output after merging can be expressed as:

\begin{equation}
o'_t = \sum_{i=1,i\notin K'_e}^{t}{A'}_i^tv_i + {A'}_r^tv_r = \frac{\sum_{i=1,i\notin K'_e}^{t}s^t_iv_i+ s^t_rv_r}{\sum_{i=1,i\notin K'_e}^{t}s^t_i + s^t_r}.
\label{eq:app:merging_output}
\end{equation}

First, we compare the denominators of $A^t$ and $A'^t$, formally proving $\sum_{i=1,i\notin K'_e}^{t}s^t_i + s^t_r < \sum_{i=1}^{t}s^t_i$:

\begin{equation}
\sum_{i=1,i\notin K'_e}^{t}s^t_i + s^t_r = \sum_{i=1}^{t}s^t_i + s^t_r - \sum_{i\in K'_e}s^t_i = e^{\frac{q^tk_r}{\sqrt{d}}}-\sum_{i\in K'_e}e^{\frac{q^tk_i}{\sqrt{d}}}
\end{equation}

Substituting Equation \ref{eq:app:normal_merging}, and applying the Weighted AM–GM Inequality, we have:

\begin{equation}
\begin{aligned}
e^{\frac{q^tk_r}{\sqrt{d}}}-\sum_{i\in K'_e}e^{\frac{q^tk_i}{\sqrt{d}}} &= e^{\frac{q^t(\sum_{i\in K'_e}w_ik_i)}{\sqrt{d}}}-\sum_{i\in K'_e}e^{\frac{q^tk_i}{\sqrt{d}}} \\ 
&= \prod_{i\in K'_e}{(e^{\frac{q^tk_i}{\sqrt{d}}})}^{w_i} - \sum_{i\in K'_e}e^{\frac{q^tk_i}{\sqrt{d}}} \\
&\leq \sum_{i\in K'_e}w_ie^{\frac{q^tk_i}{\sqrt{d}}} - \sum_{i\in K'_e}e^{\frac{q^tk_i}{\sqrt{d}}} < 0
\end{aligned}
\end{equation}

Thus,

\begin{equation}
\sum_{i=1,i\notin K'_e}^{t}s^t_i + s^t_r = \sum_{i=1}^{t}s^t_i + (s^t_r - \sum_{i\in K'_e}s^t_i) < \sum_{i=1}^{t}s^t_i
\end{equation}

Since the sum of the normalized attention scores equals one, and given that $\sum_{i=1,i\notin K'_e}^{t}s^t_i + s^t_r < \sum_{i=1}^{t}s^t_i$, we obtain:

\begin{align}
{A'}_r^t &= 1 - \frac{\sum_{i=1,\,i\notin K'_e}^{t}s^t_i }
                 {\sum_{i=1,\,i\notin K'_e}^{t}s^t_i + s^t_r} \notag \\
&< 1 - \frac{\sum_{i=1,\,i\notin K'_e}^{t}s^t_i}
             {\sum_{i=1}^{t}s^t_i} = \sum_{i\in K'_e}{A}_i^t
\end{align}

Similarly, we can derive:

\begin{equation}
{A'}_j^t = \frac{s^t_j}{\sum_{i=1,i\notin K'_e}^{t}s^t_i + s^t_r} > \frac{s^t_j}{\sum_{i=1}^{t}s^t_i} = {A}_j^t, \quad j\neq r
\end{equation}

Finally, the output perturbation can be represented as:

\begin{equation}
\begin{aligned}
\left\|o'_t-o_t\right\| &= \left\|(\sum_{i=1,i\notin K'_e}^{t}A_i^{'t}v_i + {A'}_r^tv_r) - \sum_{i=1}^{t}A_i^tv_i\right\| \\
&= \left\|\sum_{i=1,i\notin K'_e}^{t}(A_i^{'t}-A_i^t)v_i + ({A'}_r^tv_r - \sum_{i\in K'_e}{A}_i^tv_i)\right\| \\
&= \left\|\sum_{i=1,i\notin K'_e}^{t}(A_i^{'t}-A_i^t)v_i + \sum_{i\in K'_e}(w_i{A'}_r^t - A_i^t)v_i\right\| 
\end{aligned}
\end{equation}

In the above expression, all vector coefficients are nonzero. Moreover, due to the high dimensionality and sparsity of the KV cache \cite{wang2024KVMerger,gu2025attentionsink_empiricalview}, the vectors are almost linearly independent. In practical inference scenarios, it is impossible for them to form a zero vector through linear combination. Consequently, we have:
$\left\| o'_t - o_t \right\| > 0$.
\end{proof}

We term this phenomenon as \textbf{\MergingProblemName{}}, indicating that improper merging methods result in a reduced attention score for the newly merged vector, while attention scores of unmerged KVs relatively increase. This leads to output disturbances and ultimately degrades generation quality.


\subsection{KeepKV Merging Method} \label{sec:app:keepkv_method}

In the main text, we introduced the concept of merging count via the Electoral Votes mechanism, aiming for a KV pair with vote count $p_i$ to be equivalent, in attention computation, to $p_i$ independent occurrences of this KV. Moreover, the vote count of the merged KV equals the sum of vote counts before merging. Formally, the outputs before $(o_t)$ and after merging $(o'_t)$ can be expressed as follows:

\begin{align}
o_{t} &= \frac{\sum_{i=1}^{t} p_i s_i^{t} v_i}{\sum_{i=1}^{t} p_i s_i^{t}}, \notag \\
o'_{t} &= \frac{\sum_{i=1,\, i \notin K'_e}^{t} p_i s_i^{t} v_i + p_r s_r^{t} v_r}
{\sum_{i=1,\, i \notin K'_e}^{t} p_i s_i^{t} + p_r s_r^{t}}, \notag \\
p_r &= \sum_{i \in K'_e} p_i.
\label{eq:app:keepkv_attention_compute}
\end{align}

Next, we demonstrate how our new merging approach can be derived naturally from the objective of eliminating output disturbances, which consequently serves as a direct proof for Theorem \ref{thm:keepkv_zero_perturbation}.

Based on Equation \ref{eq:app:keepkv_attention_compute}, setting $\left\| o'_t - o_t \right\| = 0$, we obtain:

\begin{align}
\sum_{i=1}^{t} p_i s_i^{t} v_i &= \sum_{i=1,\, i \notin K'_e}^{t} p_i s_i^{t} v_i + p_r s_r^{t} v_r, \notag \\
\sum_{i=1}^{t} p_i s_i^{t} &= \sum_{i=1,\, i \notin K'_e}^{t} p_i s_i^{t} + p_r s_r^{t}.
\end{align}

which implies:

\begin{equation}
\sum_{i\in K'_e}p_is_i^{t}v_i = p_rs_r^{t}v_r, \quad \sum_{i\in K'_e}p_is_i^{t} =  p_rs_r^{t}
\label{eq:app:o=o,o=o}
\end{equation}

Dividing the two expressions above, we obtain the expression of $v_r$:

\begin{equation}
v_r = \frac{\sum_{i\in K'_e}p_is_i^{t}v_i}{\sum_{i\in K'_e}p_is_i^{t}} 
\end{equation}

Similarly, let $k_r = C(\sum_{i\in K'_e} p_is_i^tk_i)$, substituting this into $\sum_{i\in K'_e}^{t}p_is_i^{t} =  p_rs_r^{t}$ from Equation \ref{eq:app:o=o,o=o} and solving, we obtain:

\begin{equation}
C = \frac{\ln{\frac{\sum_{i\in K'_e}p_is_i^t}{\sum_{i\in K'_e}p_i}}}{\sum_{i\in K'_e}p_is_i^t\ln{s_i^t}}
\end{equation}

Finally, we derive the merging expression:

\begin{align}
k_r &= \frac{\left( \sum_{i \in K'_e} p_i s_i^{t} k_i \right) \ln{\left( \frac{\sum_{i \in K'_e} p_i s_i^t}{\sum_{i \in K'_e} p_i} \right)}}
{\sum_{i \in K'_e} p_i s_i^t \ln{s_i^t}}, \notag \\
v_r &= \frac{\sum_{i \in K'_e} p_i s_i^{t} v_i}{\sum_{i \in K'_e} p_i s_i^{t}}, \quad
p_r = \sum_{i \in K'_e} p_i.
\label{eq:app:keepkv_merge}
\end{align}

Consequently, merging in this manner eliminates the output disturbance in the $t$-step, satisfying: $\left\| o'_t - o_t \right\| = 0$. By setting the merging candidates $K'_e=\{e\}\cup\{c\}$, we obtain \textbf{Theorem \ref{thm:keepkv_zero_perturbation}}.

\subsection{Error Bound Analysis} \label{sec:app:error_analysis}

After extending KeepKV to multi-step generation, for $t'$-step, all $s_i^t$ terms in Equation \ref{eq:app:keepkv_merge} are replaced with $\hat{s_i^{t'}}$ 
 , which represents our estimation of future attention score trends obtained through a certain method. In this case, the merging expressions become:

\begin{align}
k_r &= \frac{\left( \sum_{i \in K'_e} p_i \hat{s}_i^{t'} k_i \right) \ln{\left( \frac{\sum_{i \in K'_e} p_i \hat{s}_i^{t'}}{\sum_{i \in K'_e} p_i} \right)}}
{\sum_{i \in K'_e} p_i \hat{s}_i^{t'} \ln{\hat{s}_i^{t'}}}, \notag \\
v_r &= \frac{\sum_{i \in K'_e} p_i \hat{s}_i^{t'} v_i}{\sum_{i \in K'_e} p_i \hat{s}_i^{t'}}, \quad
p_r = \sum_{i \in K'_e} p_i.
\label{eq:app:keepkv_merge_predict}
\end{align}

and they satisfy:

\begin{equation}
\sum_{i\in K'_e}p_i\hat{s_i^{t'}}v_i = p_r\hat{s_r^{t}}v_r, \quad \sum_{i\in K'_e}p_i\hat{s_i^{t}} =  p_r\hat{s_r^{t}}
\label{eq:app:prediction_o=o,o=o}
\end{equation}

Then the perturbation at step $s'$, can be expressed as:

\begin{align}
&\Theta_{t'} = \left\| o_{t'} - o'_{t'} \right\| \notag \\
&= \left\| 
\frac{\sum_{i=1}^{t'} p_i s_i^{t'} v_i}{
      \sum_{i=1}^{t'} p_i s_i^{t'}}
-
\frac{
\sum_{i=1,\,i\notin K'_e}^{t'} p_i s_i^{t'} v_i 
+ p_r s_r^{t'} v_r
}{
\sum_{i=1,\,i\notin K'_e}^{t'} p_i s_i^{t'} 
+ p_r s_r^{t'}
}
\right\| \notag \\
&= \frac{
\left\| \sum_{i=1}^{t'} p_i s_i^{t'} \left(
\sum_{j\in K'_e} p_j s_j^{t'} (v_j - v_i)
- p_r s_r^{t'} (v_r - v_i)
\right) \right\|
}{
\left( \sum_{i=1}^{t'} p_i s_i^{t'} \right)
\left( \sum_{i=1,\,i\notin K'_e}^{t'} p_i s_i^{t'} 
+ p_r s_r^{t'} \right)
}
\label{eq:app:theta}
\end{align}

Substituting the expression for $v_r$ from Equation \ref{eq:app:keepkv_merge_predict} and $\sum_{j\in K'_e}p_j\hat{s_j^{t'}} = p_r\hat{s_r^{t'}}$ from Equation \ref{eq:app:prediction_o=o,o=o} into the above:

\begin{align}
&\sum_{i=1}^{t'} p_i s_i^{t'} \left[
    \sum_{j \in K'_e} p_j s_j^{t'} (v_j - v_i)
    - p_r s_r^{t'} \left(
        \frac{\sum_{k \in K'_e} p_k \hat{s}_k^{t'} v_k}{
              \sum_{k \in K'_e} p_k \hat{s}_k^{t'} }
        - v_i
    \right)
\right] \notag \\
&= \sum_{i=1}^{t'} p_i s_i^{t'} \sum_{j \in K'_e} p_j s_j^{t'}
    \left( 1 - \frac{s_r^{t'}}{\hat{s}_r^{t'}} \cdot
                \frac{\hat{s}_j^{t'}}{s_j^{t'}} \right)
    (v_j - v_i)
\label{eq:app:RE_fenzi}
\end{align}

Let $\left| 1-\frac{\hat{s_i^{t'}}}{s_i^{t'}} \right| \leq \epsilon, \epsilon < 1$, then $1-\epsilon\leq\frac{\hat{s_i^{t'}}}{s_i^{t'}}\leq 1+\epsilon $, thus:

\begin{equation}
\begin{aligned}
\left|1-\frac{s_r^{t'}}{\hat{s_r^{t'}}}\frac{\hat{s_j^{t'}}}{s_j^{t'}} \right| = \left|\frac{\frac{\hat{s_r^{t'}}}{s_r^{t'}}-\frac{\hat{s_j^{t'}}}{s_j^{t'}}}{\frac{\hat{s_r^{t'}}}{s_r^{t'}}} \right| \leq \frac{2\epsilon}{1-\epsilon}, \quad j\in K'_e
\end{aligned}
\end{equation}

Let $\left\|v_j-v_i\right\|\leq\gamma,\forall i\in [t'], j\in K'_e$, where $\gamma$ represents the inherent variation in the input, which cannot be eliminated through algorithmic design. Then, applying the triangle inequality, we obtain:

\begin{align}
&\left\| \sum_{i=1}^{t'} p_i s_i^{t'} \left[
\sum_{j\in K'_e} p_j s_j^{t'} \left(1 - \frac{s_r^{t'}}{\hat{s_r^{t'}}}
\cdot \frac{\hat{s_j^{t'}}}{s_j^{t'}} \right)(v_j - v_i)
\right] \right\| \notag \\
&\leq \sum_{i=1}^{t'} p_i s_i^{t'} 
\left( \sum_{j\in K'_e} p_j s_j^{t'} 
\left| 1 - \frac{s_r^{t'}}{\hat{s_r^{t'}}}
\cdot \frac{\hat{s_j^{t'}}}{s_j^{t'}} \right| \cdot \| v_j - v_i \| \right) \notag \\
&\leq \frac{2 \epsilon \gamma}{1 - \epsilon} 
\left( \sum_{i=1}^{t'} p_i s_i^{t'} \right)
\left( \sum_{j\in K'_e} p_j s_j^{t'} \right)
\label{eq:app:bound_norm}
\end{align}

Substituting this inequality into Equation \ref{eq:app:theta}, we obtain:

\begin{equation}
\begin{aligned}
\Theta_{t'} &\leq \frac{2\epsilon\gamma}{1-\epsilon} 
\frac{(\sum_{i=1}^{t'}p_is_i^{t'})(\sum_{j\in K'_e}p_js_j^{t'})}{(\sum_{i=1}^{t'}p_is_i^{t'})(\sum_{i=1,i\notin K'_e}^{t'}p_is_i^{t'} + p_rs_r^{t'})} \\
&= \frac{2\epsilon\gamma}{1-\epsilon}
\frac{\sum_{j\in K'_e}p_js_j^{t'}}{\sum_{i=1,i\notin K'_e}^{t'}p_is_i^{t'} + p_rs_r^{t'}} \\ 
&< \frac{2\epsilon\gamma}{1-\epsilon}
\frac{\sum_{j\in K'_e}p_js_j^{t'}}{p_rs_r^{t'}}
\end{aligned}
\end{equation}

Due to Equation \ref{eq:app:prediction_o=o,o=o}, we have $\frac{\sum_{j\in K'_e}p_j\hat{s_j^{t'}}}{p_r\hat{s_r^{t'}}}=1$, then:  

\begin{equation}
\begin{aligned}
\frac{\sum_{j\in K'_e}p_js_j^{t'}}{p_rs_r^{t'}} \leq \frac{\frac{1}{1-\epsilon}\sum_{j\in K'_e}p_j\hat{s_j^{t'}}}{\frac{1}{1+\epsilon}p_r\hat{s_r^{t'}}} = \frac{1+\epsilon}{1-\epsilon}
\end{aligned}
\end{equation}

Thus, 

\begin{equation}
\begin{aligned}
\Theta_{t'} &< \frac{2\epsilon\gamma}{1-\epsilon}
\frac{\sum_{j\in K'_e}p_js_j^{t'}}{p_rs_r^{t'}} < \frac{2\epsilon (1+\epsilon)\gamma}{(1-\epsilon)^2} 
\end{aligned}
\end{equation}

Finally, we obtain the following theorem:

\begin{theorem} 
For the $t'$-th step, let $\left|1-\frac{\hat{s_i^{t'}}}{s_i^{t'}} \right| \leq \epsilon, \epsilon < 1$, the output perturbation satisfies $\Theta_{t'} < \frac{2\epsilon (1+\epsilon)\gamma}{(1-\epsilon)^2}  $, provided that $\left\|v_j-v_i\right\| \leq \gamma,\forall i\in [t'], j\in K'_e$.
\label{thm:app:KeepKV_RE}
\end{theorem}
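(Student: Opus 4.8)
\textbf{Proof Proposal for Theorem \ref{thm:app:KeepKV_RE}.}

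The plan is to start directly from the exact expression for the perturbation $\Theta_{t'} = \|o_{t'} - o'_{t'}\|$ and massage it into a form where the approximation error $\epsilon$ appears as an isolated scalar factor multiplying a bounded vector quantity. First I would write both $o_{t'}$ and $o'_{t'}$ over a common denominator and combine them into a single fraction; this is the step that produces Equation \ref{eq:app:theta}, whose numerator is $\bigl\|\sum_{i=1}^{t'} p_i s_i^{t'}\bigl(\sum_{j\in K'_e} p_j s_j^{t'}(v_j-v_i) - p_r s_r^{t'}(v_r-v_i)\bigr)\bigr\|$. The key algebraic move is then to substitute the definition of $v_r$ from the multi-step merging formula \eqref{eq:app:keepkv_merge_predict} together with the identity $\sum_{j\in K'_e} p_j \hat{s}_j^{t'} = p_r \hat{s}_r^{t'}$ from \eqref{eq:app:prediction_o=o,o=o}; after this substitution the numerator collapses, as in \eqref{eq:app:RE_fenzi}, to $\sum_{i=1}^{t'} p_i s_i^{t'}\sum_{j\in K'_e} p_j s_j^{t'}\bigl(1 - \tfrac{s_r^{t'}}{\hat{s}_r^{t'}}\cdot\tfrac{\hat{s}_j^{t'}}{s_j^{t'}}\bigr)(v_j - v_i)$. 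This is the crux: the perturbation vanishes exactly when every scalar factor $1 - \tfrac{s_r^{t'}}{\hat{s}_r^{t'}}\cdot\tfrac{\hat{s}_j^{t'}}{s_j^{t'}}$ is zero, which is precisely what happens under perfect prediction, and the whole remaining task is to show this factor is controlled by $\epsilon$.

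Next I would bound that scalar factor. Rewriting $1 - \tfrac{s_r^{t'}}{\hat{s}_r^{t'}}\cdot\tfrac{\hat{s}_j^{t'}}{s_j^{t'}} = \bigl(\tfrac{\hat{s}_r^{t'}}{s_r^{t'}} - \tfrac{\hat{s}_j^{t'}}{s_j^{t'}}\bigr)\big/\tfrac{\hat{s}_r^{t'}}{s_r^{t'}}$ and using the hypothesis $|1 - \hat{s}_i^{t'}/s_i^{t'}| \le \epsilon$, i.e.\ $1-\epsilon \le \hat{s}_i^{t'}/s_i^{t'} \le 1+\epsilon$ for every index (including $r$, since $p_r\hat s_r^{t'}$ is a positive combination and the same envelope transfers), the numerator of that ratio is at most $2\epsilon$ and the denominator at least $1-\epsilon$, giving the bound $\tfrac{2\epsilon}{1-\epsilon}$. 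Then I would apply the triangle inequality to the vector sum, pull out this uniform scalar bound, invoke the hypothesis $\|v_j - v_i\| \le \gamma$ to replace each difference norm by $\gamma$, and factor the remaining double sum, arriving at \eqref{eq:app:bound_norm}: the numerator of $\Theta_{t'}$ is at most $\tfrac{2\epsilon\gamma}{1-\epsilon}\bigl(\sum_i p_i s_i^{t'}\bigr)\bigl(\sum_{j\in K'_e} p_j s_j^{t'}\bigr)$.

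Finally I would divide by the denominator of \eqref{eq:app:theta}: one factor $\sum_i p_i s_i^{t'}$ cancels immediately, leaving $\Theta_{t'} < \tfrac{2\epsilon\gamma}{1-\epsilon}\cdot\tfrac{\sum_{j\in K'_e} p_j s_j^{t'}}{\sum_{i\notin K'_e} p_i s_i^{t'} + p_r s_r^{t'}} < \tfrac{2\epsilon\gamma}{1-\epsilon}\cdot\tfrac{\sum_{j\in K'_e} p_j s_j^{t'}}{p_r s_r^{t'}}$ after dropping the nonnegative term in the denominator. The last estimate uses the prediction identity $\sum_{j\in K'_e} p_j\hat s_j^{t'} = p_r\hat s_r^{t'}$ again: the true-score ratio $\tfrac{\sum_{j\in K'_e} p_j s_j^{t'}}{p_r s_r^{t'}}$ differs from $1$ only through the per-index envelopes, so numerator $\le \tfrac{1}{1-\epsilon}\sum_{j\in K'_e} p_j\hat s_j^{t'}$ and denominator $\ge \tfrac{1}{1+\epsilon}p_r\hat s_r^{t'}$, giving $\tfrac{1+\epsilon}{1-\epsilon}$ and hence $\Theta_{t'} < \tfrac{2\epsilon(1+\epsilon)\gamma}{(1-\epsilon)^2}$. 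The main obstacle, and the place to be careful, is the bookkeeping around the merged index $r$: one must verify that $\hat s_r^{t'}/s_r^{t'}$ genuinely lies in $[1-\epsilon,1+\epsilon]$ — this follows because both $p_r s_r^{t'}$ and $p_r\hat s_r^{t'}$ are determined (via the merging construction) as positive-weighted expressions in the $s_j^{t'}$ and $\hat s_j^{t'}$ respectively, so the uniform two-sided envelope on individual ratios propagates to the ratio of the aggregates — together with ensuring the $\ln$-based denominator in $k_r$ never vanishes, i.e.\ the $\hat s_j^{t'}$ are not all equal to $1$, which holds generically.
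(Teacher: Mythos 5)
Your proposal follows essentially the same route as the paper's proof: the same common-denominator rewriting into Equation \ref{eq:app:theta}, the same substitution of $v_r$ and the identity \eqref{eq:app:prediction_o=o,o=o} to reach \eqref{eq:app:RE_fenzi}, the same $\tfrac{2\epsilon}{1-\epsilon}$ bound on the scalar factor, and the same final $\tfrac{1+\epsilon}{1-\epsilon}$ estimate on the score ratio. The one delicate point you flag --- that the envelope $1-\epsilon \le \hat s_r^{t'}/s_r^{t'} \le 1+\epsilon$ must also hold for the merged index $r$ --- is likewise left implicit in the paper (the hypothesis is simply read as applying to every cached entry, $r$ included), so your argument is correct and, if anything, slightly more careful on that step.
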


Next, we prove the following lemma:

\begin{lemma}
As the prediction error $\epsilon$ decreases and the merged candidates become increasingly similar, the output disturbance reduces to zero. That is, when either $\epsilon=0$ or $ (k_i,v_i)=(k_j,v_j),\forall i,j\in K'_e$, we have: $\Theta_{t'} = 0$.
\label{lemma:app:RE_to0}
\end{lemma}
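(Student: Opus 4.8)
\textbf{Proof proposal for Lemma \ref{lemma:app:RE_to0}.}

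The plan is to handle the two cases separately, in both instances returning to the exact perturbation expression in Equation \ref{eq:app:theta} together with the simplified numerator in Equation \ref{eq:app:RE_fenzi}, rather than to the loose bound of Theorem \ref{thm:app:KeepKV_RE}. Recall that after substituting $v_r$ and the identity $\sum_{j\in K'_e}p_j\hat{s_j^{t'}}=p_r\hat{s_r^{t'}}$, the numerator of $\Theta_{t'}$ reduces to
\[
\left\| \sum_{i=1}^{t'} p_i s_i^{t'} \sum_{j\in K'_e} p_j s_j^{t'}\left(1 - \frac{s_r^{t'}}{\hat{s_r^{t'}}}\cdot\frac{\hat{s_j^{t'}}}{s_j^{t'}}\right)(v_j - v_i) \right\|,
\]
so it suffices to show each summand vanishes in each case. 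This is the key simplification that makes both cases immediate.

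\emph{Case $\epsilon = 0$.} If $\epsilon = 0$, then $\hat{s_i^{t'}} = s_i^{t'}$ for every index $i$, in particular for all $j\in K'_e$ and for the merged pair $r$. Hence the factor $1 - \frac{s_r^{t'}}{\hat{s_r^{t'}}}\cdot\frac{\hat{s_j^{t'}}}{s_j^{t'}} = 1 - 1\cdot 1 = 0$ for every $j\in K'_e$, so the entire double sum is zero and $\Theta_{t'} = 0$. (Equivalently, one can observe that when the estimates are exact, Equation \ref{eq:app:keepkv_merge_predict} coincides with the exact merging formula of Equation \ref{eq:app:keepkv_merge}, which is perturbation-free by Theorem \ref{thm:keepkv_zero_perturbation} generalized to the set $K'_e$.)

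\emph{Case $(k_i,v_i)=(k_j,v_j)$ for all $i,j\in K'_e$.} Write the common value as $(k_*,v_*)$. Then $s_j^{t'}=e^{q^{t'}k_*/\sqrt d}=:s_*$ and $\hat{s_j^{t'}}=\hat{s_*}$ are the same for all $j\in K'_e$; the merge formula gives $v_r = v_*$ and, since $\sum_{j\in K'_e}p_j\hat{s_j^{t'}} = p_r\hat{s_*}$ forces $\hat{s_r^{t'}}=\hat{s_*}$, one checks from the $k_r$ expression that $k_r = k_*$ as well, whence $s_r^{t'}=s_*$. Substituting into the numerator, for each $j\in K'_e$ we get $v_j - v_i = v_* - v_i$ while the scalar factor becomes $1 - \frac{s_*}{\hat{s_*}}\cdot\frac{\hat{s_*}}{s_*} = 0$; alternatively, note $\sum_{j\in K'_e}p_j s_j^{t'}(v_j - v_i) = (v_* - v_i)\sum_{j\in K'_e}p_j s_*$ and $p_r s_r^{t'}(v_r - v_i) = p_r s_*(v_* - v_i)$ are literally equal since $\sum_{j\in K'_e}p_j s_* = p_r s_*$, so the bracket in Equation \ref{eq:app:theta} is zero term-by-term. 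Either way $\Theta_{t'}=0$, completing the proof. The only mild subtlety is verifying $k_r = k_*$ (and hence $s_r^{t'}=s_*$) from the logarithmic merge formula; this is a short direct computation using that all $s_j^{t'}$ in the sum are equal, so the $\ln$ ratio and the denominator $\sum_{j}p_j s_*\ln s_*$ collapse cleanly — but I expect this bookkeeping to be the only place requiring care.
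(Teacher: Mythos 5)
Your proof is correct and follows essentially the same route as the paper: both cases reduce to showing that the scalar factor $1 - \frac{s_r^{t'}}{\hat{s}_r^{t'}}\cdot\frac{\hat{s}_j^{t'}}{s_j^{t'}}$ in the numerator of Equation \ref{eq:app:RE_fenzi} vanishes, and your verification that $k_r = k_*$ (hence $s_r^{t'} = s_*$) when all candidates coincide is the same collapse of the logarithmic merge formula that the paper carries out in Equations \ref{eq:app:exp_error_bound}--\ref{eq:app:zero_proof}, just packaged at the level of the key vector rather than the exponentiated score. The only cosmetic difference is that you handle the $\epsilon = 0$ case directly from the numerator instead of plugging $\epsilon = 0$ into the bound of Theorem \ref{thm:app:KeepKV_RE} as the paper does, which if anything is slightly cleaner.
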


\begin{proof}

By Theorem \ref{thm:app:KeepKV_RE}, it is easy to obtain that when $\epsilon = 0$, $\Theta_{t'} < \frac{2\epsilon (1+\epsilon)\gamma}{(1-\epsilon)^2}=0$. Next, we prove that when $(k_i,v_i) = (k_j,v_j), \forall i,j \in K'_e$, it also holds that $\Theta_{t'} = 0$. First, we further expand $(1-\frac{s_r^{t'}}{\hat{s_r^{t'}}}\frac{\hat{s_j^{t'}}}{s_j^{t'}}), j\in K'_e$ in Equation \ref{eq:app:RE_fenzi} by applying Equation \ref{eq:app:prediction_o=o,o=o}:

\begin{equation}
\begin{aligned}
1-\frac{s_r^{t'}}{\hat{s_r^{t'}}}\frac{\hat{s_j^{t'}}}{s_j^{t'}} &= 1- \frac{e^{\frac{q_{t'}k_r}{\sqrt{d}}}}{\frac{\sum_{i\in K'_e}p_i\hat{s_i^{t'}}}{\sum_{i\in K'_e}p_i}}\frac{\hat{s_j^{t'}}}{s_j^{t'}},\quad j\in K'_e 
\end{aligned}
\end{equation}

Substituting the expression for $k_r$ from Equation \ref{eq:app:keepkv_merge_predict} into the above:

\begin{align}
&1 - \frac{e^{\frac{q_{t'} k_r}{\sqrt{d}}}}{
\frac{\sum_{i \in K'_e} p_i \hat{s}_i^{t'}}{
\sum_{i \in K'_e} p_i
}
} \cdot \frac{\hat{s}_j^{t'}}{s_j^{t'}} \notag \\
&= 1 - 
\frac{
\left( \prod_{i \in K'_e} (s_i^{t'})^{p_i \hat{s}_i^{t'}} \right)^{
\frac{
\ln \left( \frac{\sum_{i \in K'_e} p_i \hat{s}_i^{t'}}{
\sum_{i \in K'_e} p_i} \right)
}{
\sum_{i \in K'_e} p_i \hat{s}_i^{t'} \ln \hat{s}_i^{t'}
}
}
}{
\frac{\sum_{i \in K'_e} p_i \hat{s}_i^{t'}}{
\sum_{i \in K'_e} p_i
}
} \cdot \frac{\hat{s}_j^{t'}}{s_j^{t'}}
\label{eq:app:exp_error_bound}
\end{align}

When $(k_i,v_i) = (k_j,v_j), \forall i,j \in K'_e$, it follows that $\forall i \in K'_e, s_i^{t'}=s^{t'}, \hat{s_i^{t'}}=\hat{s^{t'}}$, thereby:

\begin{align}
&1 - 
\frac{
\left( \prod_{i \in K'_e} {s_i^{t'}}^{p_i \hat{s}_i^{t'}} \right)^{
\frac{
\ln \left( \frac{ \sum_{i \in K'_e} p_i \hat{s}_i^{t'} }{ \sum_{i \in K'_e} p_i } \right)
}{
\sum_{i \in K'_e} p_i \hat{s}_i^{t'} \ln \hat{s}_i^{t'}
}
}
}{
\frac{ \sum_{i \in K'_e} p_i \hat{s}_i^{t'} }{ \sum_{i \in K'_e} p_i }
} \cdot \frac{ \hat{s}_j^{t'} }{ s_j^{t'} } \notag \\
&= 1 - 
\frac{
\left( \prod_{i \in K'_e} {s^{t'}}^{p_i \hat{s}^{t'}} \right)^{
\frac{
\ln \left( \frac{ \sum_{i \in K'_e} p_i \hat{s}^{t'} }{ \sum_{i \in K'_e} p_i } \right)
}{
\sum_{i \in K'_e} p_i \hat{s}^{t'} \ln \hat{s}^{t'}
}
}
}{
\frac{ \sum_{i \in K'_e} p_i \hat{s}^{t'} }{ \sum_{i \in K'_e} p_i }
} \cdot \frac{ \hat{s}^{t'} }{ s^{t'} } \notag \\
&= 1 - \frac{ s^{t'} }{ \hat{s}^{t'} } \cdot \frac{ \hat{s}^{t'} }{ s^{t'} } = 0
\label{eq:app:zero_proof}
\end{align}

Under this condition, it follows that Equation \ref{eq:app:RE_fenzi} equals 0, i.e.,

\begin{align}
&\sum_{i=1}^{t'} p_i s_i^{t'} \left[
\sum_{j \in K'_e} p_j s_j^{t'} (v_j - v_i)
- p_r s_r^{t'} (v_r - v_i)
\right] \notag \\
&= \sum_{i=1}^{t'} p_i s_i^{t'} \left[
\sum_{j \in K'_e} p_j s_j^{t'} 
\left(1 - \frac{s_r^{t'}}{\hat{s}_r^{t'}} \cdot \frac{\hat{s}_j^{t'}}{s_j^{t'}} \right) (v_j - v_i)
\right] = 0
\label{eq:app:perturbation_zero_condition}
\end{align}

Finally, Substituting it into Equation \ref{eq:app:theta}, we obtain $\Theta_{t'} = 0$.
\end{proof}

\begin{remark}
This lemma provides a theoretical justification for prior merging strategies favoring high-similarity KV pairs \cite{wan2024D2O,wang2024KVMerger}. Meanwhile, we offer an intuitive interpretation: if the merged two KV pairs are identical, i.e., $(k_e, v_e) = (k_c, v_c)$, retaining one pair and setting its Electoral Votes to 2 introduces no error in subsequent computations.
\end{remark}

In this section, we have proven Theorem \ref{thm:app:KeepKV_RE} and Lemma \ref{lemma:app:RE_to0}, which provide guarantees on the output perturbation in multi-step generation—an aspect that existing methods struggle to achieve. Moreover, our method demonstrates superior performance across various experimental evaluations. However, it should be acknowledged that predicting attention distributions further into the future is inherently challenging, leading to a significant increase in the estimated perturbation upper bound. Furthermore, the inherent input differences $\gamma$ cannot be ignored, representing a fundamental problem in KV cache compression—namely, the inability to perfectly compress the KV cache into a smaller memory without any loss of information. Nevertheless, our work introduces a new perspective and analytical approach to studying KV cache eviction and merging algorithms, which we hope will inspire future research.

\section{Implementation Details} \label{sec:app:impl}

\subsection{Models and benchmarks}

Across all experiments, we utilized pre-trained model weights from Huggingface \cite{wolf2020huggingfacestransformers}. Specifically, for Llama architectures: we used the 'huggyllama/llama-7b' checkpoint for Llama-1-7B \cite{touvron2023llama}, 'meta-llama/Llama-2-7b-hf' for Llama-2-7B \cite{touvron2023llama2}, and 'meta-llama/Llama-2-13b-hf' for Llama-2-13B. For Llama-3-8B, the 'meta-llama/Meta-Llama-3-8B-Instruct' checkpoint was used in the HELM evaluations, and 'meta-llama/Meta-Llama-3-8B' in LongBench evaluations. Regarding the Mistral architecture, we employed the 'mistralai/Mistral-7B-Instruct-v0.3' model \cite{jiang2023mistral7b}. For detailed information on the LongBench benchmark, please refer to its official repository \cite{bai2024longbench}.

\subsection{Experimental Setup}

Our study does not involve model training, thus no data preprocessing is required. All evaluation datasets are sourced from the publicly available lm-eval-harness \cite{lmevalharness}, HELM \cite{narayan2018xsum,nallapati2016cnndm} and LongBench \cite{bai2024longbench} benchmarks, and we follow their original evaluation metrics. Our implementation is primarily based on modifications to the open-source H2O and D2O codebases \cite{zhang2023h2o,wan2024D2O}, both of which are publicly accessible. All open-source datasets, evaluation frameworks, and algorithmic implementations used in this work are employed in full compliance with their respective licenses and terms of use.

\subsection{Details of Parameter Settings}

For cache allocation, we adopt the default configuration provided by the open-source PyramidInfer \cite{yang2024pyramidinfer} implementation. Specifically, we follow a cosine-based decay strategy for distributing cache across layers. Within each layer, the ratio between the recent window and the heavy hitter (i.e., crucial KV entries) varies between 5:1 and 4:1. Additionally, we retain the first 4 tokens (referred to as 'sink tokens' in StreamingLLM \cite{xiao2024StreamingLLM}) in the cache throughout. For algorithmic hyperparameters, we adjust the merging threshold $T$ within the range of 0.6 to 0.95, and empirically select 0.8 as the default value based on its stable performance across benchmarks. The corresponding experimental results are reported below.

\subsubsection{Prefill's Merge Policy}

\begin{table}[htbp]
\centering
\setlength{\tabcolsep}{4pt} 
\begin{tabular}{c|c|c}
\hline
Cache Budget & 1\% & 10\% \\
\hline
Evict-first & 0.051 & 0.111 \\
\hline
Merge-first & 0.059 & 0.115 \\
\hline
\end{tabular}
\caption{Performance comparison
with different merge policy during the prefill stage (ROUGE-2; LLAMA-7B).}
\label{tab:app:merge_policy}
\end{table}

The results in Table~\ref{tab:app:merge_policy} show that, during the prefill stage, performing merging before applying the eviction strategy—rather than merging after eviction as is commonly done—can improve generation quality. This is because, according to Lemma~\ref{lemma:RE_towards_0}, merging more similar items induces smaller perturbations. In the prefill stage, there exist more tokens whose similarity exceeds the threshold; merging them first allows these tokens to be consolidated rather than evicted unnecessarily, thereby preserving more information within the limited-size cache.

\subsubsection{Merging Threshold}

\begin{table}[htbp]
\centering
\setlength{\tabcolsep}{4pt} 
\begin{tabular}{c|c}
\hline
Merging Thres & XSUM \\
\hline
0.7 & 0.223 \\
\hline
0.8 & 0.233 \\
\hline
0.9 & 0.214 \\
\hline
\end{tabular}
\caption{Merging threshold impact (4\% compression ratio).}
\label{tab:app:merge_thres}
\end{table}

We investigate the impact of different merging thresholds on performance. The results in Table~\ref{tab:app:merge_thres} indicate that setting the merging threshold (i.e., the cosine similarity between key vectors) to 0.8 yields better performance, and its robustness has been further validated across a broader range of experiments.

\section{Additional Discussion}

\subsection{Limitations and Future Work}

As analyzed in Section~\ref{sec:app:thero}, the bound on output perturbation in multi-step generation depends on two quantities: the intrinsic difference of the inputs $\gamma$, and the prediction error $\epsilon$. The former is inherent and cannot be influenced or altered, whereas the latter is inevitably non-negligible in realistic inference scenarios. In practice, accurately predicting future attention scores is highly challenging. Prior work typically employs sliding-window averages or weighted averages as predictors, while more recent studies have begun to explore machine-learning-based approaches to obtain more accurate predictions; this appears to be a promising direction for future research.

Moreover, the merging operation inevitably incurs additional computational overhead, which can render it less favorable than purely eviction-based strategies with simpler computation in certain settings. Our mathematically derived lossless KV cache merging method, along with many other related algorithms, further relies on access to attention scores, which limits its compatibility with widely adopted inference acceleration techniques such as FlashAttention. Future work could therefore focus on designing KV cache eviction and merging algorithms that are amenable to integration with FlashAttention and related acceleration methods.

\subsection{Comparison with CaM in Eliminating Output Perturbations}

A closely related line of work is CaM \cite{zhang2024cam}, which also analyzes the effect of KV cache eviction and its proposed merging algorithm on output perturbation, and aims to reduce or even eliminate such perturbation. However, there are two fundamental differences between CaM and our approach: (i) how "output perturbation" is defined and modeled, and (ii) how the theoretical analysis is connected to the asynchronous KV cache updates that actually occur in autoregressive decoding.

Before presenting our analysis, it is important to clarify the implementation setting shared by both CaM and our method. In practical autoregressive decoding, KV compression is always performed inside the decoding loop: the model first runs the forward pass at step $t$ using the current KV cache and produces the output, and only afterwards applies eviction or merging operations to the cache state that will be consumed at step $t+1$ and beyond. In other words, cache compression is asynchronous with respect to the forward computation. Consequently, when we design an algorithm from the perspective of eliminating output perturbation, our first requirement is consistency at the current step: if we hypothetically recompute the step-$t$ forward pass on the compressed KV cache, the resulting output should match that obtained from the original, uncompressed cache. Intuitively, the attention computation between the step-$t$ query and the pre-compression KV cache should yield exactly the same result as the attention between the same query and the post-compression KV cache. This ensures that, when we "look back" at step $t$ in the future, its behavior is indistinguishable from a world in which step $t$ had always been computed on the compressed KV cache, without needing to refer to the original KV entries that are no longer stored. The overall decoding process thus appears temporally coherent.

In contrast, when CaM studies the effect of eviction and merging on the output at step $t$, it does not adopt such a "recompute on the compressed cache" viewpoint. Instead, it directly removes (or rescales) the contribution of the evicted or merged KV pair in the attention scores and in the attention-weighted output, while leaving the contributions of all remaining tokens unchanged. This effectively treats the attention scores of the other KV pairs as fixed coefficients that are not affected by modifying the cache. However, due to the softmax normalization, evicting or merging any KV pair inevitably changes the normalization term and thus the attention weights of all remaining KV pairs. Capturing this global effect of softmax is precisely why our analysis starts from the full attention expression, rather than from a model in which the other attention weights are held constant. In summary, CaM and our method formalize and analyze output perturbation from different perspectives, and each seeks to reduce or eliminate perturbation under its own definition, but our framework is explicitly aligned with the asynchronous KV-update pattern and with the impact of softmax on all tokens in the cache.

\end{document}